\documentclass{article}


\usepackage[numbers]{natbib}

    \usepackage[main,final]{neurips_2025}



\usepackage[utf8]{inputenc} 
\usepackage[T1]{fontenc}    
\usepackage{hyperref}       
\usepackage{url}            
\usepackage{booktabs}       
\usepackage{amsfonts}       
\usepackage{nicefrac}       
\usepackage{microtype}      
\usepackage{xcolor}         
\usepackage{subcaption}
\usepackage{amsthm}
\usepackage{algorithm} 
\usepackage{algpseudocode}
\usepackage{siunitx}
\title{Adaptive Stochastic Coefficients for Accelerating Diffusion Sampling}

%

\author{%
  Ruoyu Wang\textsuperscript{1{$\star$}}\quad Beier Zhu\textsuperscript{1,2{$\star$}}\quad Junzhi Li\textsuperscript{3,4}\quad Liangyu Yuan\textsuperscript{5}\quad Chi Zhang\textsuperscript{1${\dag}$} \\
  $^{1}$AGI Lab, Westlake University \quad  $^{2}$Nanyang Technological University \\
  $^{3}$University of Chinese Academy of Sciences \\ $^{4}$Institute of Software, Chinese Academy of Sciences\\ 
  $^{5}$ Tongji University \\
  \texttt{\{wangruoyu71,chizhang\}@westlake.edu.cn}\\
  \texttt{beier.zhu@ntu.edu.sg}\\
  \texttt{lijunzhi25@mails.ucas.ac.cn}\\
  \texttt{liangyuy001@gmail.com}
}

\usepackage{graphicx}  
\usepackage{float}     
\usepackage{array}     
\usepackage{multirow}  
\usepackage{rotating}  
\usepackage{booktabs}  
\usepackage{caption}   
\usepackage{amsmath}
\usepackage{cleveref}
\usepackage{wrapfig}
\usepackage[table]{xcolor}
\usepackage{colortbl}
\usepackage{wrapfig}
\usepackage{bm}
\usepackage{pifont}
\definecolor{lightCyan}{rgb}{0.925,1,1}
\usepackage{enumitem}

\newcommand{\ours}{\texttt{AdaSDE}}

\definecolor{tabhighlight}{HTML}{e5e5e5}

\newtheorem{theorem}{Theorem}

\newtheorem{lemma}{Lemma}

\newtheorem{remark}{Remark}


\newcommand{\R}[1]{{%
    \textbf{%
        \ifstrequal{#1}{1}{\textcolor{red}{R#1}}{%
        \ifstrequal{#1}{2}{\textcolor{blue}{R#1}}{%
        \ifstrequal{#1}{3}{\textcolor{magenta}{R#1}}{%
        \ifstrequal{#1}{4}{\textcolor{teal}{R#1}}{%
                           \textcolor{cyan}{R#1}%
        }}}}%
    }%
}}










\def\eqref#1{equation~\ref{#1}}









\def\1{\bm{1}}




\def\rvx{{\mathbf{x}}}
\def\rvy{{\mathbf{y}}}







\DeclareMathAlphabet{\mathsfit}{\encodingdefault}{\sfdefault}{m}{sl}
\SetMathAlphabet{\mathsfit}{bold}{\encodingdefault}{\sfdefault}{bx}{n}













\begin{document}

\maketitle

\begin{NoHyper}
\def\thefootnote{$\star$}\footnotetext{Equal contribution. $^\dag$Corresponding author.}
\end{NoHyper}

\begin{abstract}
Diffusion-based generative processes,  formulated as differential equation solving, frequently balance computational speed with sample quality.    Our theoretical investigation of ODE- and SDE-based solvers reveals complementary weaknesses:  ODE solvers accumulate irreducible gradient error along deterministic trajectories, while SDE methods suffer from amplified discretization errors when the step budget is limited. Building upon this insight, we introduce \ours, a novel single-step SDE solver that aims to unify the efficiency of ODEs with the error resilience of SDEs. Specifically, we introduce a single per-step learnable coefficient, estimated via lightweight distillation,  which dynamically regulates the error correction strength to accelerate diffusion sampling. Notably, our framework can be integrated with  existing  solvers to enhance their capabilities. Extensive experiments demonstrate state-of-the-art performance: at 5 NFE, \ours~ achieves FID scores of $4.18$ on CIFAR-10, $8.05$ on FFHQ and $6.96$ on LSUN Bedroom. Codes are available in \url{https://github.com/WLU-wry02/AdaSDE}. 

\end{abstract}
\section{Introduction}
\label{sec:intro}

Diffusion Models (DMs)~\cite{sohl2015deep,ho2020denoisingdiffusionprobabilisticmodels,NEURIPS2021_49ad23d1,NEURIPS2022_ec795aea,rombach2022high} have revolutionized generative modeling, achieving state-of-the-art performance across a broad range of applications~\cite{zhao2025realtimemotioncontrollableautoregressivevideo,chen2025detailtrainingfreeenhancertexttoimage,gao2025subjectconsistentposediversetexttoimagegeneration,Lei_2025_CVPR,jin2025tpblend,song2025worldforgeunlockingemergent3d4d,zhang2025videorepalearningphysicsvideo,TDM,DiQDiff}. Rooted in non-equilibrium thermodynamics, DMs learn to reverse a diffusion process: data are first gradually corrupted by Gaussian noise in a forward phase, and then reconstructed from pure noise through a learned reverse dynamics. This principled design offers stable training and exact likelihood modeling~\cite{song2021scorebasedgenerativemodelingstochastic}, resolving long-standing challenges in earlier approaches, \textit{e.g.}, GANs~\cite{goodfellow2014generativeadversarialnetworks} and VAEs~\cite{kingma2022autoencodingvariationalbayes}.

Recent advances in diffusion models have highlighted the role of differential-equation solvers in balancing sampling speed and generation quality. We first develop a unified error analysis that decomposes the total approximation error into two orthogonal components: \textbf{(1) gradient error}, the discrepancy between the learned score function and the ground-truth score; and \textbf{(2) discretization error}, introduced by time discretization during sampling. Viewed through this lens, existing solvers exhibit complementary behaviors. \textit{Ordinary differential equation (ODE)} based methods offer deterministic trajectories with modest discretization error for low number of function evaluations (NFEs), but their performance is fundamentally constrained by the {irreversible accumulation} of gradient error~\cite{meng2023distillationguideddiffusionmodels,karras2022elucidating,zhou2024fast,lu2022dpm}. In contrast, \textit{stochastic differential equation (SDE)} based methods inject stochasticity that can mitigate gradient error and enhance sample diversity; however, effectively suppressing gradient error in practice usually requires large step counts (\textit{e.g.}, 100–1{,}000 NFEs)~\cite{ho2020denoisingdiffusionprobabilisticmodels,lu2022dpm_plus}. {Hybrid strategies} such as restart sampling\cite{xu2023restartsamplingimprovinggenerative} alternate forward noise injection with backward ODE integration to combine these advantages, yet they still operate in high-NFE regimes. 

Building on the above analysis, we introduce $\ours$, a novel single-step SDE solver that unifies the computational efficiency of ODEs with the error resilience of SDEs under low-NFE budgets. Unlike traditional SDE solvers~\cite{nichol2021improveddenoisingdiffusionprobabilistic,ho2020denoisingdiffusionprobabilisticmodels} that inject fixed noise based on a predetermined time schedule, $\ours$ employs an \emph{adaptive noise injection} mechanism controlled by a learnable stochastic coefficient $\gamma_i$ at each denoising step $i$. 
To effectively optimize ${\gamma_i}$, we further develop a \emph{process-supervision} optimization framework that provides fine-grained guidance at each intermediate step rather than only supervising the final reconstruction. This design is inspired by the observation that diffusion trajectories exhibit consistent low-dimensional geometric structures across solvers and datasets~\cite{chen2024geometricperspectivediffusionmodels,chen2024trajectoryregularityodebaseddiffusion}. By aligning the geometry of the trajectories using $\gamma_i$, $\ours$ reduces gradient error through adaptive stochastic injection, while preserving deterministic efficiency of ODE solvers.

Extensive experiments on both pixel-space  and latent-space DMs demonstrate the superiority of $\ours$. Remarkably, with only 5 NFE, $\ours$ achieves FID scores of $4.18$ on CIFAR-10 \cite{krizhevsky2009learning} and $8.05$ on FFHQ 64$\times$64 \cite{lin2014microsoft}, surpassing the leading AMED-Solver~\cite{zhou2024fast} by 1.8$\times$. Our contributions are threefold:
\begin{itemize}[leftmargin=1.5em]
    \item  We conduct a theoretical comparison of SDE and ODE error dynamics, demonstrating that SDEs offer more robust gradient error control.
    \item  We introduce $\ours$, the first single-step SDE solver that achieves efficient sampling ($<$10 NFEs) by optimizing adaptive $\gamma$-coefficients. Moreover, $\ours$ serves as a universal plug-in module that can enhance existing single-step solvers.
    \item Extensive evaluations on multiple generative benchmarks show that our \ours~achieves state-of-the-art performance with significant FID gains over existing solvers.

\end{itemize}



\section{Related Work}
\label{sec:related}

Recent advancements in accelerating DMs primarily progress along two directions: improved numerical solvers and training-based distillation.

\noindent\textbf{Improved numerical solvers.}
Early studies~\cite{ho2020denoisingdiffusionprobabilisticmodels,nichol2021improveddenoisingdiffusionprobabilistic} accelerated sampling by improving noise-schedule design, and DDIM~\cite{songdenoising} later introduced a non-Markovian formulation that enabled deterministic and much faster sampling.
The establishment of the probability-flow ODE view~\cite{song2021scorebasedgenerativemodelingstochastic} further unified diffusion formulations and paved the way for higher-order numerical schemes and practical preconditioning strategies, exemplified by EDM~\cite{Karras2022edm}. Following this insight, a series of ODE/SDE integrators have emerged to push the accuracy–speed frontier.
For instance, DEIS~\cite{zhangfast}, DPM-Solver~\cite{lu2022dpm}, and DPM-Solver++\cite{lu2022dpm_plus} exploit exponential integration, Taylor expansion, and data-prediction parameterization to achieve robust few-step sampling. Linear multistep variants, including iPNDM~\cite{liupseudo,zhangfast} and UniPC~\cite{zhao2024unipc}, further enable efficient DMs sampling with $\sim$10 NFE.
Hybrid and stochastic extensions extend beyond deterministic solvers: Restart Sampling~\cite{xu2023restartsamplingimprovinggenerative} alternates ODE trajectories with SDE-style noise injection; SA-Solver~\cite{xue2024sa} introduces a training-free stochastic Adams multi-step scheme with variance-controlled noise.

\noindent\textbf{Training-based distillation.}
 Two main paradigms dominate this research direction.
The first is \textit{trajectory approximation}, which uses compact student networks to approximate trajectories generated by teacher models, reducing computational steps. This can be achieved offline: by curating datasets from pre-generated samples~\cite{liu2022flowstraightfastlearning}, or online through progressive distillation that gradually decreases the number of sampling steps~\cite{berthelot2023tractdenoisingdiffusionmodels,meng2023distillationguideddiffusionmodels}.
The second paradigm is \textit{temporal alignment}, which enforces coherence across sampling trajectories by aligning intermediate predictions between adjacent timesteps~\cite{kim2024consistencytrajectorymodelslearning,luo2023latentconsistencymodelssynthesizing}, or by minimizing distributional gaps between real and synthesized data~\cite{sauer2023adversarialdiffusiondistillation,wang2023prolificdreamerhighfidelitydiversetextto3d}.
While these methods improve generation quality and efficiency, they typically require substantial computational resources and complex training protocols, limiting their practicality.
Recent distillation-based solvers—such as AMED~\cite{zhou2024fast}, EPD~\cite{zhu2025distilling}, and D-ODE~\cite{kim2024consistencytrajectorymodelslearning}—achieve few-step sampling through lightweight tuning rather than full retraining. Complementary efforts on time schedule optimization, including LD3~\cite{tong2025learningdiscretizedenoisingdiffusion}, DMN~\cite{DMN}, and GITS~\cite{chen2024trajectoryregularityodebaseddiffusion}, further improve efficiency. While most few-step samplers are rooted in ODE formulations, our approach introduces few-step SDE-driven generation by learning stochastic coefficients under a computationally lightweight objective.

\section{Preliminaries}
\label{sec:Preliminaries}
\subsection{Diffusion Models with Differential Equations}
DMs define a forward process that perturbs data into a  noise distribution, followed by a learned reverse process that inverts this perturbation to generate samples. The forward process is designed as a stochastic trajectory governed by a predefined noise schedule, which can be described by:
\begin{equation}\label{eq:sde}
\mathrm{d}\mathbf{x}=\tfrac{\dot{s}(t)}{s(t)} \mathbf{x}+s(t) \sqrt{2 \sigma(t) \dot{\sigma}(t)} \mathrm{d} \mathbf{w}
\end{equation}
where $\sigma(t)$ is the monotonically increasing noise schedule, and $\mathbf{w}$ denotes a standard Wiener process. This formulation ensures that the marginal distribution $p_t(\mathbf{x})$ at time $t$ corresponds to the convolution of the data distribution $p_0=p_{\text {data }}$ with a Gaussian kernel of variance $\sigma^2(t)$. By selecting a sufficiently large terminal time $T, p_T$ converges to an isotropic Gaussian $\mathcal{N}(\mathbf{0}, \sigma^2(T) \mathbf{I})$, serving as the prior.
Sampling is performed by reversing the forward dynamics through either a reverse-time SDE in Eq.~(\ref{eq:sde}) or an ODE~\cite{song2021scorebasedgenerativemodelingstochastic}:
\begin{equation}\label{eq:2}
      \quad \mathrm{d} \mathbf{x}=-\sigma(t) \dot{\sigma}(t) \nabla_\mathbf{x} \log p_t(\mathbf{x}) \mathrm{d} t .
\end{equation}
Here, the score function $\nabla_\mathbf{x} \log p_t(\mathbf{x})$ is the drift term that guides samples toward high density regions of $p_0$. Following common practice \cite{karras2022elucidating}, the noise schedule is simplified to $\sigma(t)=t$, reducing $\sigma(t) \dot{\sigma}(t)$ to $t$.
A neural network $s_\theta(\mathbf{x}, t)$ is optimized through denoising score matching \cite{song2021scorebasedgenerativemodelingstochastic} to estimate the score function. The training objective minimizes the weighted expectation:
\begin{equation}\label{eq:Loss1}
    \mathbb{E}_{t, \mathbf{x}_0, \mathbf{x}_t}\left[\lambda(t)\left\|s_\theta(\mathbf{x}_t, t)-\nabla_{\mathbf{x}_t}\log p_{t}(\mathbf{x}_t\mid \mathbf{x}_0)\right\|^2\right]
\end{equation}
where $\lambda(t)$ specifies the loss weighting schedule and $p_{t}\left(\mathbf{x}_t \mid \mathbf{x}_0\right)$ denotes the Gaussian transition kernel of the forward process. During sampling, $s_\theta(\mathbf{x}, t)$ serves as a surrogate for the true score in the reverse-time dynamics, reducing the general SDE in Eq.~(\ref{eq:2}) to the deterministic gradient flow: 
\begin{equation}\label{eq:simplify}
\mathrm{d} \mathbf{x}= s_{\theta}(\mathbf{x}_t, t) \mathrm{d} t
\end{equation}

\section{Analysis of ODE and SDE }\label{sec:comparison}

\subsection{Trade-offs Between ODE and SDE Solvers}
\label{trade-off}

The choice between ODE and SDE solvers in DMs entails  trade-offs among sampling speed, quality, and error dynamics.
ODE solvers, characterized by deterministic trajectories, offer computational efficiency and stability through compatibility with compatibility with higher-order numerical methods, \textit{e.g}., iPNDM \cite{liupseudo,zhangfast}. 
 Such solvers reduce local discretization errors and achieve competitive sample quality with as few as 10–50 steps~\cite{lu2022dpm,karras2022elucidating}. 
 However, their deterministic nature limits their ability to correct errors from imperfect score function approximations, leading to performance plateaus as step count increases~\cite{xu2023restartsamplingimprovinggenerative}.
 Furthermore, the absence of stochasticity may suppress fine-grained variations, potentially reducing sample diversity compared to SDE-based methods~\cite{ho2020denoisingdiffusionprobabilisticmodels}.
 
 In contrast, SDE solvers leverage stochasticity to counteract accumulated discretization and gradient errors over time, enabling superior sample fidelity in high-step regimes~\cite{xu2023restartsamplingimprovinggenerative}. The injected noise further encourages exploration of the data manifold, improving diversity~\cite{ho2020denoisingdiffusionprobabilisticmodels}.  However, these benefits come at the cost of significantly larger step counts (typically 100–1{,}000) required to suppress errors that scale as $O(\delta^{3/2})$, compared to $O(\delta^{2})$ for ODEs~\cite{xu2023restartsamplingimprovinggenerative,Dalalyan_2019}. Moreover, SDE trajectories are highly sensitive to suboptimal noise schedules, particularly in low-step settings~\cite{nichol2021improveddenoisingdiffusionprobabilistic}.
  While reverse-time SDEs theoretically guarantee convergence to the true data distribution under ideal conditions \cite{ANDERSON1982313}, their computational cost often renders them impractical for real-time applications.
 
 Recent hybrid approaches, such as Restart sampling \cite{xu2023restartsamplingimprovinggenerative}, reconcile these trade-offs by alternating deterministic steps with stochastic resampling, leveraging ODE efficiency for coarse trajectory simulation while resetting errors via SDE-like noise injection. This strategy highlights the complementary strengths of both methods, positioning hybrid frameworks at the forefront of quality-speed Pareto frontiers in diffusion-based generation. 
However, Restart sampling still performs under high-step regimes ($>$50 steps). 

\subsection{Error Propagation in Deterministic and Stochastic Sampling}
The trade-offs discussed in~\Cref{trade-off} raise a key question: 

\begin{center}
\textit{Can SDE-based approaches achieve efficient sampling with substantially fewer steps?}
\end{center}

To answer this, we build on the theoretical frameworks of~\cite{xu2023restartsamplingimprovinggenerative,Dalalyan_2019} to analyze the total sampling error of ODE and SDE formulations under the Wasserstein-1 metric.
We begin with the discretized ODE system $\mathsf{ODE}_\theta$, governed by the learned drift field $s_\theta$, and examine its approximation behavior over the interval $\left[t, t+\Delta t\right] \subset [0, T]$.
Theorem~\ref{ode:error} formalizes this analysis and establishes an upper bound on the Wasserstein-1 distance between the generated and true data distributions (proof in Appendix~\ref{appsec:proof-the-1}).

\begin{theorem}(\textbf{ODE Error Bound}~\cite{xu2023restartsamplingimprovinggenerative})\label{ode:error}
Let $\Delta t>0$ denote the discretization step size. 
Over the interval $[t,t+\Delta t]$, the trajectory $\mathbf{x}_t=\mathsf{ODE}_\theta\left(\mathbf{x}_{t+\Delta t}, t+\Delta t \rightarrow t\right)$ is generated by the learned drift $s_\theta$, and the induced distribution is denoted by $p_t^{\mathsf{ODE}_\theta}$. We make the following assumptions:\\
\textbf{A1. Lipschitz and bounded drift:} $t s_\theta(\mathbf{x}, t)$ is $L_2$-Lipschitz in $\mathbf{x}$, $L_0$-Lipschitz in $t$ and uniformly bounded by $L_1$.\\
\textbf{A2:}  The learned drift satisfies a uniform supremum bound: 
$\sup _{\mathbf{x}, t}\left\|t s_\theta(\mathbf{x}, t)-t \nabla \log p_t(\mathbf{x})\right\| \leq \epsilon_t.$\\
\textbf{A3. Bounded trajectories:} $\left\|\mathbf{x}_t\right\| \leq B / 2$   for all $t \in\left[t, t+\Delta t\right]$.\\
The Wasserstein-1 distance between $p_t^{\mathsf{ODE}_\theta}$ and the true distribution $p_t$ satisfies:
\begin{equation}\label{eq:ode-error}
\underbrace{W_1\left(p_t^{\mathsf{ODE}_\theta}, p_t\right)}_{\text{total error}}
\leq \underbrace{B \cdot \mathsf{TV}\left(p_{t+\Delta t}^{\mathsf{ODE}_\theta}, p_{t+\Delta t}\right)}_{\text{\ding{192} gradient error bound}} + 
\underbrace{e^{L_2 \Delta t}\left(\Delta t(L_2 L_1 + L_0) + \epsilon_t\right) \Delta t}_{\text{\ding{193} discretization error bound}}
\end{equation}
where $\mathsf{TV}(\cdot, \cdot)$ denotes the total variation distance.
\end{theorem}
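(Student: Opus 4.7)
The plan is to decompose the total error via a coupling argument between $p_{t+\Delta t}^{\mathsf{ODE}_\theta}$ and $p_{t+\Delta t}$, then propagate it forward to time $t$ and bound the two resulting contributions separately. Specifically, I would let $\pi^\star$ be a maximal coupling of these two distributions so that a pair $(\mathbf{x}_0,\mathbf{y}_0)\sim\pi^\star$ satisfies $\Pr(\mathbf{x}_0\neq\mathbf{y}_0)=\mathsf{TV}\!\left(p_{t+\Delta t}^{\mathsf{ODE}_\theta},p_{t+\Delta t}\right)$. Evolving $\mathbf{x}_0$ along the discretized learned field $\mathsf{ODE}_\theta$ and $\mathbf{y}_0$ along the exact probability-flow ODE driven by $\nabla\log p_s$ produces a valid coupling of $p_t^{\mathsf{ODE}_\theta}$ with $p_t$, so the Kantorovich dual of $W_1$ yields $W_1(p_t^{\mathsf{ODE}_\theta},p_t)\leq\mathbb{E}\|\mathbf{x}_t-\mathbf{y}_t\|$.

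Next I would split the expectation according to whether the coupled pair agrees at time $t+\Delta t$. On the event $\{\mathbf{x}_0\neq\mathbf{y}_0\}$, which has probability equal to the total-variation distance at the starting time, Assumption A3 yields the pointwise bound $\|\mathbf{x}_t-\mathbf{y}_t\|\leq B$, which contributes precisely the first (gradient) term \ding{192}. On the complementary event both trajectories share the same initial condition, so only the one-step local error between the discretized learned ODE and the exact ODE remains.

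To bound this local error, I would write $\mathbf{x}_t-\mathbf{y}_t$ as an integral over $[t,t+\Delta t]$ of the difference of the two drifts and split the integrand into three sources: (i) the drift approximation error, controlled by $\epsilon_t$ through Assumption A2; (ii) a temporal-freezing error introduced by the Euler-type step of $\mathsf{ODE}_\theta$, controlled by the Lipschitz-in-time part of A1 and contributing $L_0\,\Delta t$; and (iii) a spatial mismatch controlled by $L_2\|\mathbf{x}_s-\mathbf{y}_s\|$ together with an $L_2 L_1\,\Delta t$ contribution that arises from the uniform bound $L_1$ on the drift. Gronwall's inequality applied to the resulting scalar integral inequality absorbs the spatial-Lipschitz term into the multiplicative factor $e^{L_2\Delta t}$, producing the discretization-error term \ding{193}.

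The main obstacle will be the careful bookkeeping in the local-error step: one must distinguish the frozen endpoint drift used by $\mathsf{ODE}_\theta$ from the instantaneous true drift $\nabla\log p_s$, handle the $t$-prefactor consistently (since A1 and A2 are stated for $t\,s_\theta$ rather than $s_\theta$ alone), and group the three error sources so that Gronwall delivers exactly the factor $e^{L_2\Delta t}$ multiplying $\bigl(\Delta t(L_2 L_1+L_0)+\epsilon_t\bigr)\Delta t$, rather than a weaker constant or an extra polynomial in $\Delta t$. Once this bookkeeping is tight, summing the two conditional contributions from Steps 2 and 3 immediately yields the claimed bound.
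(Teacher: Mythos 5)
Your proposal is correct and follows essentially the same route as the paper: the paper splits $W_1$ via a triangle inequality through the intermediate distribution obtained by running the learned ODE from the true $p_{t+\Delta t}$, bounding the first term by $B\cdot\mathsf{TV}$ (bounded support) and the second by the Gronwall-based local discretization lemma, which is exactly what your maximal-coupling/event-splitting argument achieves term for term. The only cosmetic differences are that you package the two terms as one coupling with conditioning on the agreement event rather than a triangle inequality, and you invoke the primal (coupling) characterization of $W_1$ while calling it the Kantorovich dual.
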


The bound in Eq.~\ref{eq:ode-error} consists of two term distinct interpretations.
The first term \ding{192} is the gradient error bound which reflects the discrepancy between the learned score function and the ground-truth one at the start time $t+\Delta t$. It also captures the propagation of errors accumulated from earlier time steps. 
The second term \ding{193} is the discretization error bound, which represents the newly introduced errors within the current interval $[t,t+\Delta t]$. 
Since the ODE process is deterministic, any discrepancy between the generated and true distributions at $t+\Delta t$ is directly carried forward to time $t$, without stochastic mechanisms to dissipate it.

Next, we introduce our \ours~update over the interval $[t, t+\Delta t]$, defined as:
\[
\mathbf{x}_t = 
\mathsf{AdaSDE}_\theta\!\left(\mathbf{x}_{t+\Delta t},\, t+\Delta t \!\rightarrow\! t,\, \gamma\right),
\]
which inserts a stochastic \textit{forward perturbation} followed by a deterministic \textit{backward process}.
\begin{align}
   &\mathbf{x}_{t+\Delta t }^{\gamma}
  = \mathbf{x}_{t + (1+\gamma)\Delta t}
  = \mathbf{x}_{t+\Delta t}
  + \varepsilon_{t+\Delta t \rightarrow\, t+(1+\gamma)\Delta t} \tag{Forward process}, \\
  &\mathbf{x}_{t}
  = \mathsf{ODE}_\theta\!\left(\mathbf{x}_{t+\Delta t }^{\gamma},\, t+(1+\gamma)\Delta t \rightarrow t\right) ,\tag{Backward process} 
\end{align}
where 
\[
\varepsilon_{t+\Delta t \rightarrow\, t+(1+\gamma)\Delta t}
~\sim~
\mathcal{N}\!\left(
\mathbf{0},
\left( (t+(1+\gamma)\Delta t)^2-(t+\Delta t)^2 \right)\mathbf{I}
\right).
\]
Here, $\gamma \in (0,1)$ is a tunable coefficient with its optimization deferred in \Cref{sec:method}. 
Different from deterministic ODE, \ours~introduces controlled noise injection to mitigate error accumulation. 
\Cref{sde:error} establishes an error bound between the generated and the true data distribution for our \ours~(proof in Appendix~\ref{appsec:proof-the-2}).

\begin{theorem}\label{sde:error}
    Under the same assumptions in \Cref{ode:error}. Let $p_{t}^{\mathsf{AdaSDE}_{\theta}}$ denote the distribution after \textup{$\ours$} update over the interval $[t,t+\Delta t]$. Then
    
\begin{align}
W_1\left(p_{t}^{\mathsf{AdaSDE}_{\theta}}, p_{t}\right) \leq  \underbrace{B \cdot(1-\lambda(\gamma)) \mathsf{TV}\left(p_{t +(1+\gamma )\Delta t }^{\mathsf{AdaSDE}}, p_{{t +(1+\gamma )\Delta t }}\right)}_{\text {gradient error bound}} \\
 +\underbrace{e^{(1+\gamma) L_2\Delta t}(1+\gamma)\left((1+\gamma)\Delta t\left(L_2 L_1+L_0\right)+\epsilon_t\right)\Delta t}_{\text{ discretization error bound}}
\end{align}
    where $\lambda(\gamma)=2\,Q\bigl(\dfrac{B}{2\,\sqrt{(t+(1+\gamma)\, \Delta t)^2 - t^2}}\bigr)$, $Q(r)=\mathrm{Pr}(a \geq r)$ for $a \sim \mathcal{N}(0,1)$.  
\end{theorem}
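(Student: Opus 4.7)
The plan is to bound the total error by decomposing the $\ours$ step into a Gaussian noise-injection sub-step followed by a deterministic $\mathsf{ODE}_\theta$ backward sub-step over the extended interval $[t, t+(1+\gamma)\Delta t]$, and then treating each sub-step with a dedicated argument.

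For the backward ODE sub-step, I would invoke Theorem~\ref{ode:error} verbatim with step size $(1+\gamma)\Delta t$ in place of $\Delta t$. Since the Lipschitz and boundedness constants in A1--A3 are uniform in $t$ and $\mathbf{x}$, the analysis transfers without modification and directly produces the discretization bound $e^{(1+\gamma)L_2\Delta t}(1+\gamma)\bigl((1+\gamma)\Delta t(L_2L_1+L_0)+\epsilon_t\bigr)\Delta t$ as the second term, while leaving the gradient-error prefactor $B \cdot \mathsf{TV}(p^{\mathsf{AdaSDE}}_{t+(1+\gamma)\Delta t}, p_{t+(1+\gamma)\Delta t})$ to be controlled separately.

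The crucial new ingredient is the contraction factor $(1-\lambda(\gamma))$. I would establish it using the standard fact that convolution with a common isotropic Gaussian strictly contracts the total variation metric on bounded-support measures. Concretely, for any two probability measures supported in a ball of radius $B/2$ (Assumption A3), one can build a coupling that synchronises the injected noise; for each pair of source points $\mathbf{a},\mathbf{b}$ the overlap of the two shifted Gaussians is lower-bounded by $2Q(\|\mathbf{a}-\mathbf{b}\|/(2\sigma))$, so averaging the worst case $\|\mathbf{a}-\mathbf{b}\|\leq B$ yields the contraction factor $1 - 2Q(B/(2\sigma)) = 1-\lambda(\gamma)$ with $\sigma = \sqrt{(t+(1+\gamma)\Delta t)^2 - t^2}$. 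Applying this contraction to the TV term produced by the backward-ODE bound gives the first (gradient) term in the claimed inequality.

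The main obstacle is verifying that the natural $\sigma$ arising from the contraction argument matches the expression inside $\lambda(\gamma)$, namely $(t+(1+\gamma)\Delta t)^2 - t^2$, rather than the nominal injection variance $(t+(1+\gamma)\Delta t)^2 - (t+\Delta t)^2$. I would resolve this by re-expressing both $p^{\mathsf{AdaSDE}}_{t+(1+\gamma)\Delta t}$ and the true $p_{t+(1+\gamma)\Delta t}$ as Gaussian convolutions of appropriate measures at time $t$, using the forward SDE in Eq.~(\ref{eq:sde}) for the true marginal and composing the injection step with the backward flow on the $\ours$ side. Once this alignment is in place, combining the TV contraction with the ODE bound from Theorem~\ref{ode:error} yields the stated inequality.
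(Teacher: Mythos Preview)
Your overall strategy matches the paper's: decompose via the triangle inequality into a gradient term (controlled by TV contraction under Gaussian convolution, then converted to $W_1$ via the diameter bound $B$) and a discretization term (the local ODE error of Theorem~\ref{ode:error} applied over the extended interval $[t,\,t+(1+\gamma)\Delta t]$). The paper packages the contraction step as a separate lemma (reflection-/overlap-coupling of two Gaussians with centres at distance at most $B$, giving the factor $1-2Q(B/2\sigma)$) and the discretization step as another lemma, then combines them exactly as you outline.

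Your concern about the variance inside $\lambda(\gamma)$ is well taken. The paper does \emph{not} resolve it via your proposed route of rewriting both measures as convolutions from time $t$; instead, its contraction lemma is simply stated with noise injection from level $t$ (variance $(t+(1+\gamma)\Delta t)^2-t^2$), which matches the $\lambda(\gamma)$ in the theorem but is inconsistent with the main-text AdaSDE definition, where the forward perturbation starts at $t+\Delta t$ and has variance $(t+(1+\gamma)\Delta t)^2-(t+\Delta t)^2$. Your attempted fix---expressing $p^{\mathsf{AdaSDE}}_{t+(1+\gamma)\Delta t}$ as a Gaussian convolution of a time-$t$ measure---does not go through cleanly either, because the learned backward flow $\mathsf{ODE}_\theta$ is not the inverse of Gaussian smoothing. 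So you have correctly identified a gap that the published proof leaves unaddressed; the rest of your plan is the same as the paper's.
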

As shown in \Cref{sde:error}, the decoupled formulation tightens the Wasserstein-1 error bound through a reduced coefficient $B(1-\lambda(\gamma))$.
We next formalize this improvement by comparing the gradient-error terms of ODE and \ours~formulations in \Cref{thm:ODE-restart-compare}.
\begin{theorem}
\label{thm:ODE-restart-compare}
Under the same assumptions as in \Cref{ode:error} and \Cref{sde:error}, we denote:
\begin{align}
&\mathcal{E}_{\mathsf{grad}}^{\mathsf{ODE}}
= B \cdot
\mathsf{TV}\Bigl(p_{t+\Delta t}^{\mathsf{ODE}_{\theta}},\,p_{t+\Delta t}\Bigr), \tag{\text{ODE gradient error}}\\
&\mathcal{E}_{\mathsf{grad}}^{\mathsf{AdaSDE}}
= B\cdot\bigl(1-\lambda(\gamma)\bigr)\,\mathsf{TV}\Bigl(p_{t+(1+\gamma)\Delta t}^{\mathsf{AdaSDE}},\,p_{t+(1+\gamma)\Delta t}\Bigr). \tag{\text{SDE gradient error}}
\end{align}

Then we have $\mathcal{E}_\mathsf{grad}^{\mathsf{AdaSDE}}\leq 
\mathcal{E}_\mathsf{grad}^{\mathsf{ODE}}$, where the inequality is strict when $\gamma >0$.
\end{theorem}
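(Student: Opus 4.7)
The plan is to exploit the key structural fact that AdaSDE's forward perturbation is precisely the same Gaussian kernel that the true forward diffusion applies between $t+\Delta t$ and $t+(1+\gamma)\Delta t$. I would first make explicit the implicit assumption underlying the comparison, namely that both samplers receive the same input distribution at time $t+\Delta t$ (so that we are comparing one update step on a common starting point), giving $p_{t+\Delta t}^{\mathsf{ODE}_\theta} = p_{t+\Delta t}^{\mathsf{AdaSDE}}$. Denote this common distribution by $q$.

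Next, I would write the two relevant distributions at time $t+(1+\gamma)\Delta t$ as convolutions. By construction of the AdaSDE forward process, $p_{t+(1+\gamma)\Delta t}^{\mathsf{AdaSDE}} = q * \mathcal{N}(\mathbf{0}, \sigma_\gamma^2 \mathbf{I})$ with $\sigma_\gamma^2 = (t+(1+\gamma)\Delta t)^2 - (t+\Delta t)^2$. By the semigroup property of the variance-exploding forward SDE in~\eqref{eq:sde} with $\sigma(s)=s$, the true marginal satisfies $p_{t+(1+\gamma)\Delta t} = p_{t+\Delta t} * \mathcal{N}(\mathbf{0}, \sigma_\gamma^2 \mathbf{I})$ with exactly the same kernel. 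Applying the data-processing inequality for total variation under the common Markov kernel of convolution then yields
\[
\mathsf{TV}\bigl(p_{t+(1+\gamma)\Delta t}^{\mathsf{AdaSDE}}, p_{t+(1+\gamma)\Delta t}\bigr) \leq \mathsf{TV}\bigl(q, p_{t+\Delta t}\bigr) = \mathsf{TV}\bigl(p_{t+\Delta t}^{\mathsf{ODE}_\theta}, p_{t+\Delta t}\bigr).
\]

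To conclude, I would multiply both sides by $B(1-\lambda(\gamma))$ and use $(1-\lambda(\gamma)) \leq 1$ to obtain
\[
\mathcal{E}_{\mathsf{grad}}^{\mathsf{AdaSDE}} = B(1-\lambda(\gamma))\,\mathsf{TV}\bigl(p_{t+(1+\gamma)\Delta t}^{\mathsf{AdaSDE}}, p_{t+(1+\gamma)\Delta t}\bigr) \leq B\,\mathsf{TV}\bigl(p_{t+\Delta t}^{\mathsf{ODE}_\theta}, p_{t+\Delta t}\bigr) = \mathcal{E}_{\mathsf{grad}}^{\mathsf{ODE}}.
\]
For the strict inequality when $\gamma>0$, I would argue that the Gaussian smoothing lemma underlying the bound in Theorem~\ref{sde:error} gives a genuine contraction factor $\lambda(\gamma)>0$ whenever the injected noise variance $\sigma_\gamma^2$ is positive. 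Since $Q(r)>0$ for every finite $r$, and the argument of $Q$ in the definition of $\lambda(\gamma)$ is finite under the bounded-trajectory assumption \textbf{A3}, we indeed have $\lambda(\gamma)>0$ as soon as $\gamma>0$, producing strict contraction.

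The main obstacle I anticipate is the bookkeeping around the precise form of $\lambda(\gamma)$: its stated expression uses $\sqrt{(t+(1+\gamma)\Delta t)^2 - t^2}$, which is the total noise variance from time $t$, not just the perturbation variance $\sigma_\gamma^2$ added by AdaSDE. Reconciling this with the Gaussian-contraction-for-TV lemma likely requires passing through the ODE backward map on both sides and using the support bound $B$ on the trajectory at the target time $t$ (assumption \textbf{A3}) rather than at $t+(1+\gamma)\Delta t$; careful matching of the radius used in the $Q$-function argument with the post-ODE support bound is the only delicate point. All other ingredients---data processing, the semigroup identity, and $(1-\lambda(\gamma))\leq 1$---are immediate.
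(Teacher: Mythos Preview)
Your proposal is correct and follows essentially the same route as the paper: both argue that the AdaSDE forward perturbation and the true forward diffusion apply the \emph{same} Gaussian kernel between $t+\Delta t$ and $t+(1+\gamma)\Delta t$, invoke the data-processing inequality for $\mathsf{TV}$ under a common convolution (the paper's Lemma~\ref{TVnotdecrease}), and then use $(1-\lambda(\gamma))<1$ to conclude. Your explicit identification of the common starting distribution $p_{t+\Delta t}^{\mathsf{ODE}_\theta}=p_{t+\Delta t}^{\mathsf{AdaSDE}}$ and of the semigroup identity for $p_{t+(1+\gamma)\Delta t}$ matches the paper's proof sketch, and your strictness argument via $\lambda(\gamma)>0$ for $\gamma>0$ is the same mechanism the paper uses (with the caveat, noted in the paper's accompanying remark, that strictness additionally requires the pre-smoothing $\mathsf{TV}$ to be nonzero).
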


\begin{proof}[Proof sketch] (full proof in Appendix~\ref{appsec:proof-thm-3})
For the ODE update, $\mathcal{E}_\mathsf{grad}^{\mathsf{ODE}}$ depends on the total-variation distance between the distributions at time $t+\Delta t$. For $\ours$ update, $\mathcal{E}_{\mathsf{AdaSDE}}$ includes a contraction factor $(1-\lambda(\gamma))$ and is evaluated at the higher noise level $t+(1+\gamma)\Delta t$.
Define the Gaussian kernel
\[
\phi_\gamma(z)
=(2\pi\sigma_\gamma^2)^{-d/2}\exp\!\left(-\frac{\|z\|^2}{2\sigma_\gamma^2}\right),\qquad 
\sigma_\gamma^2=\bigl(t+(1+\gamma)\Delta t\bigr)^2-\bigl(t+\Delta t\bigr)^2.
\]
The distributions after the noise injection satisfy
\[
p_{t+(1+\gamma)\Delta t}=p_{t+\Delta t}\ast\phi_\gamma,
\qquad
q_{t+(1+\gamma)\Delta t}=q_{t+\Delta t}\ast\phi_\gamma.
\]
By Lemma~\ref{TVnotdecrease} in Appendix, convolution with the same Gaussian kernel does not increase total variation distance:
\[
\mathsf{TV}\!\left(p_{t+\Delta t}\!\ast\!\phi_\gamma,\; q_{t+\Delta t}\!\ast\!\phi_\gamma\right)
\le
\mathsf{TV}\!\left(p_{t+\Delta t},\, q_{t+\Delta t}\right).
\]
Consequently,
\[
\mathcal{E}_\mathsf{grad}^{\mathsf{AdaSDE}}
\;\le\;
(1-\lambda(\gamma))\,\mathcal{E}_\mathsf{grad}^{\mathsf{ODE}},
\]
with a strictly smaller bound whenever $\gamma>0$.
\end{proof}

Although the gradient error term of \ours~enjoys a tighter bound through $B(1-\lambda(\gamma))$,
its discretization error grows rapidly under large time steps $(\Delta t)$ with noise schedules scaling as $\gamma(t)\!\propto\!\Delta t$.
 Specifically, the exponential growth factor $e^{(1+\gamma) L_2 \Delta t}$ combined with the quadratic $\Delta t$-dependence in $(1+\gamma)^2 \Delta t^2\left(L_2 L_1+L_0\right)$ creates error amplification that scales asymptotically as $O(\Delta t e^{C \Delta t})$ when $\gamma \sim O(\Delta t)$. This dominates the improved gradient error control, particularly during critical initial denoising steps where the product $(1+\gamma) \Delta t$ violates discretization stability conditions. This amplification offsets the benefit of gradient-error contraction,
causing total error accumulation along the trajectory and explaining the degraded few-step performance of SDE-based sampling in practice.

\subsection{Synthetic Validation}

\begin{figure*}[t]
    \centering
\includegraphics[width=\textwidth]{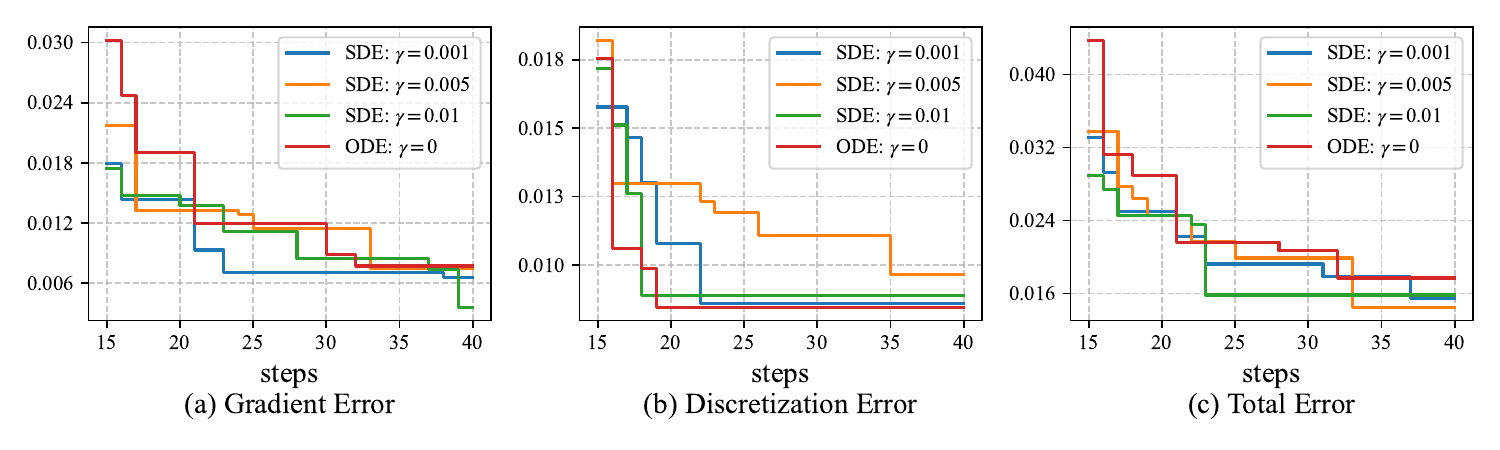}

    \caption{Gradient error, Discretization error and Total error on synthetic dataset across various steps (measured in 1-Wasserstein Distance). $\gamma=0$ indicates adding no stochasticity (ODE), $\gamma>0$ indicates SDE variants, figures are plotted in Pareto Frontier.}\label{fig:synthetic_experiments}
\end{figure*}
To verify the error-mitigation capability of stochastic updates in \ours, we conduct experiments on a 2D double-circle synthetic dataset, comparing the total, gradient, and discretization errors.

\begin{wrapfigure}[18]{r}{0.4\textwidth} 
\vspace{-6pt}                             
\centering
\includegraphics[width=0.35\textwidth]{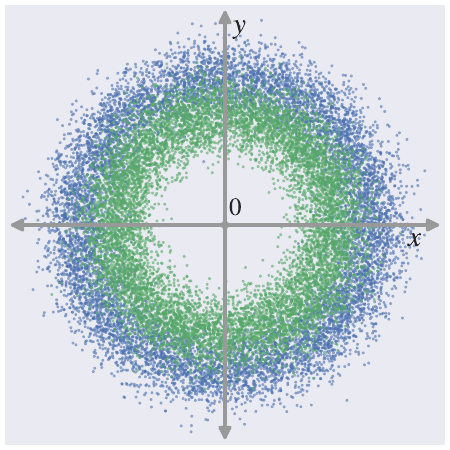}
\caption{Illustration of the 2D double-circles: two concentric rings with radii $0.8$ (outer, blue) and $0.6$ (inner, green). We uniformly sample $20{,}000$ points and add isotropic Gaussian noise ($\sigma=0.1$).}
\label{fig:toydataset}
\vspace{-6pt}                             
\end{wrapfigure}
\noindent\textbf{Setup.}  As illustrated in~\Cref{fig:toydataset}, we use a 2D double-circle dataset consisting of $20,000$ samples uniformly distributed along two concentric circles with radii of $0.8$ (outer) and $0.6$ (inner), each perturbed by Gaussian noise with a standard deviation of $0.1$.
We follow the training and sampling procedures of EDM~\cite{Karras2022edm} to model the data distribution, employing the second-order Heun method for SDE integration.
The stochastic coefficient~$\gamma$ is varied over $\{0, 0.001, 0.005, 0.01\}$, where $\gamma=0$ corresponds to the deterministic ODE sampler.

To quantify different types of errors, we measure the 2D Wasserstein-1 distance between corresponding distributions.
\textbf{The total error} is computed as the distance between the ground-truth data distribution and the generated distribution.
To estimate \textbf{gradient} and \textbf{discretization errors}, we first construct an intermediate regenerated distribution.
Specifically, given the dataset of $20,000$ samples, we perturb each point by Gaussian noise according to \(\mathbf{\mathbf{x}}_{t_{\text{mid}}} = \mathbf{\mathbf{x}}_0 + t_{\text{mid}}\sigma\), where \(t_{\text{mid}} = 0.8\) and perform one-third of a denoising step to obtain the regenerated samples.
The gradient error is defined as the distance between the regenerated distribution and the model-generated distribution at $T = 80.0$,
while the discretization error is defined as the distance between the regenerated distribution and the ground-truth distribution.

\noindent\textbf{Result.} The gradient error, discretization error, and total error over the steps range $t \in [15, 40]$ are illustrated in~\Cref{fig:synthetic_experiments}. It is observed that the discretization error of ODEs is less than that of SDE variants (in~\Cref{fig:synthetic_experiments}~(b)), corresponding to the derived result that the upper bound for ODE sampling error (stated in~\Cref{ode:error}) is less than that for SDEs (stated in~\Cref{sde:error}) by a multiplication factor. However, the gradient error (\textit{i.e}., error caused by network approximation) of SDEs ($\gamma > 0$) drops compared to ODE counterparts (in~\Cref{fig:synthetic_experiments}~(a)), validating the Wasserstein-1 distance bound in~\Cref{thm:ODE-restart-compare}. The stochastic step is effective in alleviating the gradient error made by network approximation. Consequently, as shown in~\Cref{fig:synthetic_experiments}~(c), the total error accumulated throughout the sampling process decreases due to the reduction of gradient error brought by stochasticity, confirming the effectiveness of our approach in improving sampling accuracy.
 Given the above theoretical analysis and synthetic validation on Wasserstein-1 distance, we present the following remark.

\begin{remark}
    Let \(\mathcal{E}_{\mathsf{total}}(N, \gamma)\) represent the accumulated sampling error for a discretization of \(N\) steps with parameter \(\gamma\). Then for \(\forall N\in \mathbb{Z}^+\),  \(\exists ~\gamma\in (0,1) \) such that:
    \begin{equation*}
        \mathcal{E}_{\mathsf{total}}(N, \gamma) \leq \mathcal{E}_{\mathsf{total}}(N, 0)
    \end{equation*}
\end{remark}

\section{Methodology}\label{sec:method}

Building on the above theoretical and empirical validation, we introduce \ours, a single-step SDE solver that  parameterizes the stochastic coefficient~$\gamma$ as learnable variable.
This design unleashes the potential of SDE-based solvers under low-NFE regimes.

\subsection{Sampling Trajectory Geometry}\label{subsec:sampling trajectory}
The  trajectories generated by Eq.~(\ref{eq:simplify}) exhibit low complexity geometric features with implicit connections to annealed mean displacement, as established in previous work \cite{chen2024trajectoryregularityodebaseddiffusion,chen2024geometricperspectivediffusionmodels}. Each sample initialized from the noise distribution progressively approaches the data manifold through smooth, quasi-linear trajectories characterized by monotonic likelihood improvement. In addition, under identical dataset and time schedule, all sampling trajectories demonstrate geometric consistency across different sampling methods. This geometric insight motivates a discrete-time distillation framework. By strategically inserting intermediate temporal steps within student trajectories, we construct high-fidelity reference trajectories. This enables process-supervised optimization that rigorously determines the governing $\gamma$ parameters for trajectory segments. Specifically, given a student time schedule $\mathcal{T}_{\text {stu }}=\left\{t_0, t_1, \ldots, t_N\right\}$ with $N$ steps, we insert $M$ intermediate steps between $t_n$ and $t_{n+1}$ (denoted as $\mathcal{T}_{\text {tea }}=\{t_0, t_0^{(1)}, \ldots, t_0^{(M)}, t_1, \ldots, t_N\}$ ) to generate refined teacher trajectories. Notably, our interpolation scheme employs a flexible strategy that allows for selecting different time schedules based on various solvers. This adaptability enhances the fidelity of teacher trajectories.
\begin{figure*}[t]
\centering
\includegraphics[width=\textwidth]{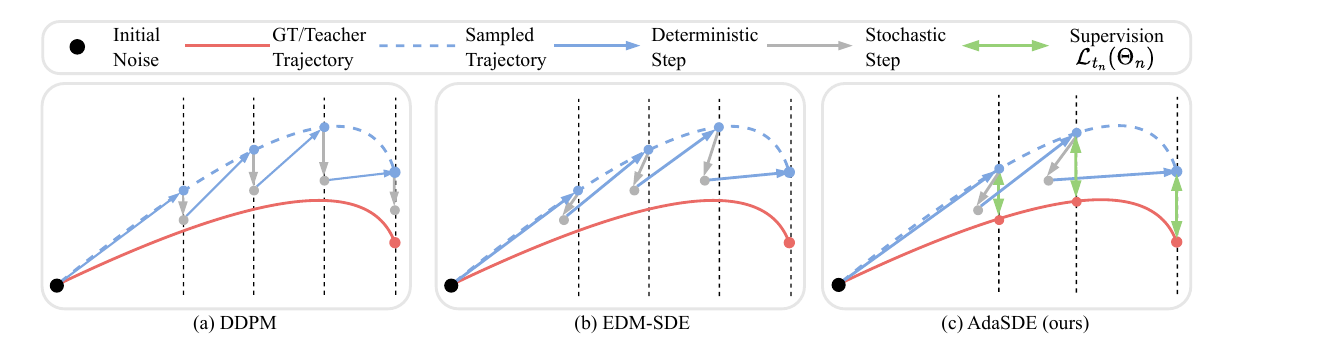}
\caption{The proposed \(\ours\) framework. {\(\ours\)} diverges from traditional heuristic noise injection methods used in DDPM~\cite{ho2020denoisingdiffusionprobabilisticmodels} and EDM-SDE~\cite{karras2022elucidating}. Instead, we use intermediary supervision from a teacher sampling path to learn and optimize the noise injection process.}
\label{fig:method}
\end{figure*}

\begin{figure}[t] 

\begin{minipage}[t]{0.48\textwidth}
\begin{algorithm}[H] 
\caption{Optimizing $\Theta_{1:N}$}\label{al_1}
\begin{algorithmic}[1]
\State \textbf{Given:} time schedules $\mathcal{T}_\mathsf{stu}$ and  $\mathcal{T}_\mathsf{tea}$
\State \textbf{Repeat until} convergence
\State \hspace{4mm} Sample $\rvx_{t_N}=\rvy_{t_N}\sim \mathcal{N}(\mathbf{0},t^2_N\mathbf{I})$ 
\State \hspace{4mm} \textbf{for $n = N$ \textbf{to} $1$ do} 
\State  \hspace{9mm} Sample $\bm{\epsilon}_{n} \sim \mathcal{N}(\mathbf{0}, \mathbf{I})$
    \State \hspace{9mm} ${\{\gamma,\xi,\lambda,\mu\}_{n} }\gets \Theta_{n}$
    \State \hspace{9mm} $\hat{t}_{n} \gets t_{n} + \gamma_{n} t_{n}$
    \State \hspace{9mm} $\mathbf{x}_{t_{n}} \gets \mathbf{x}_{t_{n}} + \sqrt{\hat{t}_{n}^2 - t_{n}^2}\boldsymbol{\epsilon}_n$ 
    \State \hspace{9mm} Compute $\rvx_{t_{n-1}}$ using~Eq.~(\ref{eq:finalx})
    \State \hspace{9mm} Update $\Theta_{n}$ via Eq.~(\ref{eq:loss})
    \State \hspace{4mm} \textbf{end for}

\end{algorithmic}
\end{algorithm}
\end{minipage}
\hfill 
\begin{minipage}[t]{0.48\textwidth}
\begin{algorithm}[H] 
\caption{$\ours$ sampling}\label{al_2}
\begin{algorithmic}[1]
\State \textbf{Given:} parameters $\Theta_{1:N}$, student time schedule $\mathcal{T}_\mathsf{stu}$
\State Initialize $\rvx_{t_N}\sim \mathcal{N}(\mathbf{0},t^2_N\mathbf{I})$
\For{$n = N$ \textbf{to} $1$}
    \State Sample $\bm{\epsilon}_{n} \sim \mathcal{N}(\mathbf{0}, \mathbf{I})$
    \State ${\{\gamma,\xi,\lambda,\mu\}_{n} }\gets \Theta_{n}$
    \State $\hat{t}_{n} \gets t_{n} + \gamma_{n} t_{n}$
    \State $\mathbf{x}_{t_{n}} \gets \mathbf{x}_{t_{n}} + \sqrt{\hat{t}_{n}^2 - t_{n}^2} \boldsymbol{\epsilon}_n$ 
    \State Compute $\rvx_{t_{n-1}}$ using Eq.~(\ref{eq:finalx})
\EndFor
\State \textbf{Return:} $\rvx_{t_0}$
\vspace{-0.8mm}
\end{algorithmic}
\end{algorithm}
\end{minipage}
\end{figure} 
\subsection{Fast SDE-based Sampling}
We extend the midpoint-based correction mechanisms Eq.~(\ref{eq:mean value}) from AMED-Solver \cite{zhou2024fast} to SDEs, proposing a sampling framework that strategically aligns stochastic perturbations with learned trajectory geometry. 
\begin{equation}\label{eq:mean value}
\mathbf{x}_{t_n} \approx \mathbf{x}_{t_{n+1}}+(t_n - t_{n+1}) \underbrace{s_\theta\left(\mathbf{x}_{\xi_n}, \xi_n\right)}_{\text{midpoint gradient}},{\xi_n} \in [t_{n+1},t_n] 
\end{equation}
The parameterization adopts the design from DPM-Solver's intermediate time step construction, formally defined as $\xi_n=\sqrt{t_n t_{n+1}}$. This square-root formulation guarantees smooth geometric interpolation between adjacent time steps in the noise scheduling process. Building on insights from~\cite{ning2024elucidatingexposurebiasdiffusion,li2024alleviatingexposurebiasdiffusion} showing network scaling mitigates input mismatches, we propose learnable parameters $\left\{\lambda_n, \mu_n\right\}$ to adaptively adjust both exposure bias and timestep scales.
The parameters $\Theta_n=\left\{\gamma_n, \xi_n, \lambda_n, \mu_n\right\}_{n=1}^N$ are optimized through our discrete-time distillation framework described in \Cref{subsec:sampling trajectory}. Consequently, Eq.~(\ref{eq:mean value}) can be reformulated in the following  form:
\begin{equation}\label{eq:finalx}
\mathbf{x}_{t_n} \approx \mathbf{x}_{t_{n+1}}+(1+\lambda_n)\left(t_n-t_{n+1}\right) s_\theta\left(\mathbf{x}_{\xi_n}, \xi_n+\mu_n\right)
\end{equation}
Let $\left\{\mathbf{y}_{t_n}\right\}_{n=1}^N$ denote the reference states of teacher trajectories. Starting from the identical initial noise $\mathbf{y}_{t_0}$, we generate student trajectories by optimizing the parameter sequence $\left\{\Theta_n\right\}_{n=1}^N$, resulting in student states $\left\{\mathbf{x}_{t_n}\right\}_{n=1}^N$ that align with the teacher trajectories under a predefined metric $d(\cdot, \cdot)$.
Crucially, since $\mathbf{x}_{t_n}$ depends on all preceding parameters $\left\{\Theta_n\right\}_{n=1}^N$ through the iterative sampling process, we implement stagewise optimization by minimizing the cumulative alignment loss at each timestep $t_n$ :
\begin{equation}\label{eq:loss}
\mathcal{L}_{t_n}({\Theta_n})= d\left(\mathbf{x}_{t_n}, \mathbf{y}_{t_n}\right)
\end{equation}
In each training loop, we perform backpropagation 
$N$ times. The comparison with existing SDE solvers are presented in \Cref{fig:method}. The complete training algorithm is detailed in \Cref{al_1}, while the inference procedure is outlined in \Cref{al_2}. AdaSDE serves as a plug-and-play module for existing solvers. To implement this, we train the AdaSDE predictor \Cref{al_1} by replacing the mean update in \Cref{eq:mean value} with the target solver's formulation.

\section{Experiments}

\label{sec:exp}

\subsection{Experiment Setup}
\noindent \textbf{Models and datasets.} 
We apply $\ours$ and DPM-Plugin to five pre-trained diffusion models across diverse domains. For pixel-space models, we include CIFAR10 (32 × 32) \cite{krizhevsky2009learning}, FFHQ (64 × 64) \cite{karras2019style}, and ImageNet (64 × 64) \cite{russakovsky2015imagenet}. For latent-space models, we evaluate LSUN Bedroom (256 × 256) \cite{yu2015lsun} with a guidance scale of 1.0. Additionally, we consider text-to-image high-resolution generation models, including Stable Diffusion v1.5 \cite{rombach2022high} at 512 × 512 resolution with a guidance scale of 7.5.
\makeatletter
\renewcommand\thesubtable{(\alph{subtable})}
\makeatother

\begin{table*}
\scriptsize
\captionsetup[subfloat]{labelformat=simple, labelsep=space}
\caption{
Image generation results across different datasets. 
(a) CIFAR10 \cite{liu2022flowstraightfastlearning} (unconditional), 
(b) FFHQ \cite{lin2014microsoft} (unconditional), 
(c) ImageNet \cite{russakovsky2015imagenet} (conditional), 
(d) LSUN Bedroom \cite{yu2015lsun} (unconditional). 
We compared AdaSDE-Solver and the training-required method AMED-Solver \cite{zhou2024fast}, as well as other training-free methods. AdaSDE achieves superior performance across all datasets.
}
\label{tab:main_results}
\begin{minipage}[t]{0.5\textwidth}

\subfloat[CIFAR10 $32 \times 32$ \cite{krizhevsky2009learning}]{
\begin{tabular}{lcccc}
\toprule
\multirow{2}{*}{\textbf{Method}} & \multicolumn{4}{c}{NFE} \\
\cmidrule{2-5} & 3 & 5 & 7 & 9 \\


\midrule
Multi-Step Solvers & & & &\\
\midrule
DPM-Solver++(3M)~\cite{lu2022dpm_plus} & 110.0 & 24.97 & 6.74 & 3.42 \\
UniPC~\cite{zhao2024unipc} & 109.6 & 23.98 & 5.83 & 3.21 \\
iPNDM~\cite{liupseudo,zhangfast} & 47.98 & 13.59 & 5.08 & 3.17 \\
\midrule
Single-Step Solvers & & & & \\
\midrule
DDIM~\cite{songdenoising} & 93.36 & 49.66 & 27.93 & 18.43 \\
Heun~\cite{karras2022elucidating} & 306.2 & 97.67 & 37.28 & 15.76 \\
DPM-Solver-2~\cite{lu2022dpm} & 153.6 & 43.27 & 16.69 & 8.65 \\
\rowcolor{lightCyan}
DPM-Plugin (ours) & 39.57 & 13.75 & 9.19  & 7.21  \\
AMED-Solver~\cite{zhou2024fast} & 18.49 & 7.59 & 4.36 & 3.67 \\
\rowcolor{lightCyan}
$\ours$ (ours) & \textbf{12.62}  & \textbf{4.18}  & \textbf{2.88} & \textbf{2.56}\\
\bottomrule
\end{tabular}
}

\vspace{1.5em} 
\subfloat[FFHQ $64 \times 64$ \cite{lin2014microsoft}]{
\begin{tabular}{lcccc}
\toprule
\multirow{2}{*}{\textbf{Method}} & \multicolumn{4}{c}{NFE} \\
\cmidrule{2-5} & 3 & 5 & 7 & 9 \\
\midrule
 Multi-Step Solvers & & & & \\
\midrule
DPM-Solver++(3M)~\cite{lu2022dpm_plus} & 86.45 & 22.51 & 8.44 & 4.77 \\
UniPC~\cite{zhao2024unipc} & 86.43 & 21.40 & 7.44 & 4.47 \\
iPNDM~\cite{liupseudo,zhangfast} & 45.98 & 17.17 & 7.79 & 4.58 \\
\midrule
Single-Step Solvers & & & & \\
\midrule
DDIM~\cite{songdenoising} & 78.21 & 43.93 & 28.86 & 21.01 \\
Heun~\cite{karras2022elucidating} &356.5 &116.7 &54.51 &28.86 \\
DPM-Solver-2~\cite{lu2022dpm} & 215.7 & 74.68 & 36.09 & 16.89 \\
\rowcolor{lightCyan}
DPM-Plugin (ours) & 66.31 & 20.80  & 14.51  & 10.48  \\
AMED-Solver~\cite{zhou2024fast}  & 47.31 & 14.80 & 8.82 & 6.31 \\
\rowcolor{lightCyan}
$\ours$ (ours) & \textbf{23.80}  & \textbf{8.05} & \textbf{5.11} & \textbf{4.19}\\
\bottomrule
\end{tabular}
}
\end{minipage}
\hfill
\begin{minipage}[t]{0.5\textwidth}
\centering

\subfloat[ImageNet $64 \times 64$ \cite{russakovsky2015imagenet}]{
\begin{tabular}{lcccc}
\toprule
\multirow{2}{*}{\textbf{Method}} & \multicolumn{4}{c}{NFE} \\
\cmidrule{2-5} & 3 & 5 & 7 & 9 \\
\midrule
 Multi-Step Solvers & & & & \\
\midrule
DPM-Solver++(3M)~\cite{lu2022dpm_plus} & 91.52 & 25.49 & 10.14 & 6.48 \\
UniPC~\cite{zhao2024unipc} & 91.38 & 24.36 & 9.57 & 6.34 \\
iPNDM~\cite{liupseudo,zhangfast} & 58.53 & 18.99 & 9.17 & 5.91 \\
\midrule
Single-Step Solvers & & & & \\
\midrule
DDIM~\cite{songdenoising} & 82.96 & 43.81 & 27.46 & 19.27 \\
Heun~\cite{karras2022elucidating} & 249.4 & 89.63 & 37.65 & 16.76 \\
DPM-Solver-2~\cite{lu2022dpm} & 140.2 & 59.47 & 22.02 & 11.31 \\
\rowcolor{lightCyan}
DPM-Plugin (ours) &108.9 & 17.03 &11.69  &8.06  \\
AMED-Solver~\cite{zhou2024fast} & 38.10 & 10.74 & 6.66 & 5.44 \\
\rowcolor{lightCyan}
$\ours$ (ours) & \textbf{18.51} & \textbf{6.90} & \textbf{5.26} & \textbf{4.59}\\
\bottomrule
\end{tabular}
}

\vspace{1.5em}

\subfloat[LSUN Bedroom $256 \times 256$ \cite{yu2015lsun}]{
\begin{tabular}{lcccc}
\toprule
\multirow{2}{*}{\textbf{Method}} & \multicolumn{4}{c}{NFE} \\
\cmidrule{2-5} & 3 & 5 & 7 & 9 \\
\midrule
Multi-Step Solvers & & & & \\
\midrule
DPM-Solver++(3M)~\cite{lu2022dpm_plus} & 111.9 & 23.15 & 8.87 & 6.45 \\
UniPC~\cite{zhao2024unipc} & 112.3 & 23.34 & 8.73 & 6.61 \\
iPNDM~\cite{liupseudo,zhangfast} & 80.99 & 26.65 & 13.80 & 8.38 \\
\midrule
Single-Step Solvers & & & & \\
\midrule
DDIM~\cite{songdenoising} & 86.13 & 34.34 & 19.50 & 13.26 \\
Heun~\cite{karras2022elucidating} & 291.5&175.7 & 78.66& 35.67 \\
DPM-Solver-2~\cite{lu2022dpm} & 227.3 & 47.22 & 23.21 & 13.80 \\
\rowcolor{lightCyan}
DPM-Plugin (ours) &97.13 &21.02  &13.68  &10.89  \\
AMED-Solver~\cite{zhou2024fast} & 58.21 & 13.20 & 7.10 & 5.65 \\
\rowcolor{lightCyan}
$\ours$ (ours) & \textbf{18.03} & \textbf{6.96} & \textbf{5.69} & \textbf{5.16} \\
\bottomrule
\end{tabular}
}
\end{minipage}

\end{table*}

\noindent\textbf{Solvers and time schedules.}
We compare $\ours$ against state-of-the-art single-step and multi-step ODE solvers. The single-step baselines include training-free methods—DDIM \cite{songdenoising}, EDM \cite{karras2022elucidating}, and DPM-Solver-2 \cite{lu2022dpm}, as well as the lightweight-tuning approach AMED-Solver \cite{zhou2024fast}. For multi-step methods, we evaluate DPM-Solver++ (3M) \cite{lu2022dpm_plus}, UniPC \cite{zhao2024unipc}, and iPNDM \cite{zhangfast,liupseudo}. To further demonstrate the effectiveness of our method, we also conduct a head-to-head comparison between DPM-Plugin and DPM-Solver-2 \cite{lu2022dpm}.

\begin{table}[t]
\centering
\begin{minipage}[t]{0.48\textwidth}
    \centering
    \caption{FID results on Stable Diffusion v1.5 \cite{rombach2022high} with a classifier-free guidance weight $w=7.5$.}
    \label{tab:sup_fid_stable_diff}
    \centering
\scriptsize
\label{tab:sd}
\begin{tabular}{lcccc}
  \toprule
  \multirow{2}{*}{\textbf{Method}} & \multicolumn{4}{c}{NFE} \\
  \cmidrule{2-5}
  & 4 & 6 & 8 & 10 \\
  \midrule
\textbf{MSCOCO 512×512} & & & &\\
\midrule
  DPM-Solver++(2M)~\cite{lu2022dpm_plus} & 21.33 & 15.99 & 14.84 & 14.58 \\
  AMED-Plugin~\cite{zhou2024fast} & \textbf{18.92} & {14.84} & {13.96} & {13.24} \\
    DPM-Solver-v3~\cite{zheng2023dpm} & - & 16.41 & 15.41 & 15.32 \\
  \rowcolor{lightCyan}
  $\ours$ (ours) & 30.89  & \textbf{13.99} & \textbf{13.39} & \textbf{12.68} \\
  \bottomrule
\end{tabular}
\end{minipage}
\hfill
\begin{minipage}[t]{0.48\textwidth}
    \centering
    \caption{Ablation study of time schedules on CIFAR-10 \cite{krizhevsky2009learning}.}
    \label{tab:schedule}
    \centering
\scriptsize
\begin{tabular}{lcccc}
\toprule
\multirow{2}{*}{\textbf{Time schedule}} & \multicolumn{4}{c}{NFE}\\
\cmidrule{2-5} & 3 & 5 & 7 & 9 \\
\midrule \textbf{CIFAR-10 32×32} & & & &\\
\midrule
Time Uniform~\cite{ho2020denoisingdiffusionprobabilisticmodels} & 12.62 & \textbf{4.18} & \textbf{2.88} & \textbf{2.56} \\
Time Polynomial  \cite{karras2022elucidating} & \textbf{11.61}& 10.05 & 5.14 & 3.35 \\
Time LogSNR \cite{lu2022dpm} & 23.38 & 10.42 & 7.96 & 4.84 \\
\bottomrule
\end{tabular}
\end{minipage}
\end{table}

To ensure an equitable and consistent comparison, our study faithfully adheres to the time scheduling strategies as recommended in the related work~\cite{karras2022elucidating,lu2022dpm_plus,zhao2024unipc}. Specifically, we implement the logarithmic signal-to-noise ratio (logSNR) scheduling for DPM-Solver\{-2, ++(3M)\} and UniPC algorithms. For other baseline algorithms, EDM time schedule with $\rho$ set to 7 has been employed. For AdaSDE and DPM-Plugin, we implement time-uniform schedule.

\noindent\textbf{Learned perceptual image patch similarity}
While some search-based frameworks employ LPIPS as their distance metric \cite{zhou2024simple}, we observed that using LPIPS during the intermediate steps of our method provided no significant performance gains and substantially increased training duration. Consequently, to balance efficiency and final quality, our approach utilizes Mean Squared Error (MSE) for optimizing intermediate steps, while applying the LPIPS metric in the final stage to enhance the overall training outcome.

\noindent\textbf{Training details.}
Our $\ours$ is assessed at low NFE settings  ($\text{NFE}\in \{3,5,7,9\}$) with AFS \cite{dockhorn2022geniehigherorderdenoisingdiffusion} implemented. Sample quality is gauged using the Fréchet Inception Distance (FID) \cite{ramesh2021zeroshottexttoimagegeneration} over 50k images. For Stable-Diffusion, We evaluate FID as \cite{ramesh2021zeroshottexttoimagegeneration}, using 30k samples from fixed prompts based on the MS-COCO \cite{lin2014microsoft} validation set. The random seed was fixed to 0 to ensure consistent reproducibility of the experimental results.


\subsection{Main Results}\label{sec:main_results}

In~\cref{tab:main_results}, we benchmark AdaSDE against single- and multi-step baseline solvers on CIFAR-10, FFHQ, ImageNet~64$\times$64, and LSUN Bedroom across varying NFE. We observe \emph{consistent and substantial} improvements in the low-step regime (3--9 NFE). For example, at NFE=9 we obtain FIDs of 4.59 (ImageNet) and 5.16 (LSUN Bedroom), while the second-best single-step baseline (AMED-Solver) reaches 5.44 and 5.65, respectively, indicating clear gains. In an even more challenging few-step setting (NFE=3 on LSUN Bedroom), AdaSDE achieves 18.03 FID, markedly outperforming AMED-Solver’s 58.21. On CIFAR-10, NFE=5 yields 4.18 FID (vs. AMED-Solver’s 7.59); on FFHQ, NFE=5 yields 8.05, substantially better than DPM-Plugin’s 20.80 and DPM-Solver-2’s 74.68. Overall, AdaSDE maintains—and often widens—its advantage as the number of steps decreases.

We further evaluate AdaSDE on Stable Diffusion v1.5 with classifier-free guidance set to $7.5$, reporting FID on the MS-COCO validation set (see~\cref{tab:sd}). At NFE=8/10, AdaSDE attains 13.39/12.68, surpassing DPM-Solver++(2M) at 14.84/14.58 and AMED-Plugin at 13.96/13.24, while remaining competitive with DPM-Solver-v3 across multiple step counts. These results indicate that our adaptive stochastic coefficient not only improves pixel-space diffusion models but also transfers robustly to high-resolution text-to-image generation in latent space. Additional quantitative results are provided in \Cref{fig:sup_grid_cifar10_2,fig:sup_grid_cifar10_3,fig:sup_grid_cifar10_4}.

\subsection{Ablation Studies}\label{sec:ablations}

\noindent\textbf{Effect of the stochastic coefficient.}
We quantify the contribution of the learned stochastic coefficient by comparing \ours\ with and without $\gamma_n$ on CIFAR-10, FFHQ, and Stable Diffusion~v1.5 (MS-COCO); see~\cref{tab:ablation_gamma_cifar,tab:ablation_gamma_mscoco}. Removing $\gamma_n$ consistently degrades FID, with the effect most pronounced in the few-step regime. On CIFAR-10, FID rises from 12.62 to 13.32 at NFE=3 and from 4.18 to 4.36 at NFE=5. On FFHQ~$64\times64$, we observe similar trends: FID increases from 23.80 to 25.85 at NFE=3 and from 8.04 to 8.11 at NFE=5. The benefit is especially clear on SD~v1.5 (MS-COCO~$512\times512$): when $\gamma_n$ is removed, FID rises from 30.89 to 37.23 at NFE=4 and from 13.79 to 16.34 at NFE=6, while the gap narrows as steps increase (12.68 with $\gamma_n$ versus 12.82 without at NFE=10). These results support that injecting learned stochasticity stabilizes few-step trajectories and mitigates error accumulation in low-NFE sampling.

\begin{table*}[t]
\scriptsize
\begin{minipage}[t]{0.48\textwidth} 
\centering
\caption{Ablation of {$\gamma_n$} on CIFAR-10 \cite{krizhevsky2009learning} and FFHQ \cite{lin2014microsoft}.} 
\begin{tabular}{lcccc}
\toprule
\multirow{2}{*}{\textbf{Training configuration}} & \multicolumn{4}{c}{NFE}\\
\cmidrule{2-5} & 3 & 5 & 7 & 9 \\
\midrule \textbf{CIFAR-10 32×32} & & & &\\
  \rowcolor{lightCyan}
\midrule $\ours$ & \textbf{12.62} & \textbf{4.18} & \textbf{2.88} & \textbf{2.56} \\
\quad w.o. $\gamma_n$ & 13.32 & 4.36 & 2.91 & 2.63 \\
\midrule \textbf{FFHQ 64×64} & & & &\\
  \rowcolor{lightCyan}
\midrule $\ours$ & \textbf{23.80} & \textbf{8.04} & \textbf{5.11} & \textbf{4.19} \\
\quad w.o. $\gamma_n$ & 25.85 & 8.11 & 5.12 & 4.27 \\

\bottomrule
\end{tabular}

\label{tab:ablation_gamma_cifar}
\end{minipage}\hfill 
\begin{minipage}[t]{0.48\textwidth} 
\centering
\caption{Ablation of {$\gamma_n$} on Stable Diffusion v1.5~\cite{rombach2022high}.} 
\begin{tabular}{lcccc}
\toprule
\multirow{2}{*}{\textbf{Training configuration}} & \multicolumn{4}{c}{NFE}\\
\cmidrule{2-5} & 4 & 6 & 8 & 10 \\
\midrule \textbf{MSCOCO 512×512} & & & &\\
  \rowcolor{lightCyan}
\midrule $\ours$ & \textbf{30.89} & \textbf{13.79} & \textbf{13.39} & \textbf{12.68} \\
\quad w.o. $\gamma_n$ & 37.23 & 16.34 & 14.18 & 12.82 \\
\bottomrule
\end{tabular}
\label{tab:ablation_gamma_mscoco}
\end{minipage}
\end{table*}

\noindent\textbf{Effect of time schedule.}
We further compare common time schedules on CIFAR-10—LogSNR, EDM (polynomial), and time-uniform—summarized in~\cref{tab:schedule}. The time-uniform schedule is the most reliable once NFE is at least 5, achieving FID scores of 4.18, 2.88, and 2.56 at NFE=5, 7, and 9, respectively, clearly outperforming the polynomial (10.05, 5.14, 3.35) and LogSNR (10.42, 7.96, 4.84) schedules. At the extreme NFE=3 setting, the polynomial schedule attains a marginally lower FID than the uniform schedule (11.61 versus 12.62), but its performance degrades rapidly as NFE increases. Overall, we adopt the time-uniform schedule as the default for few-step experiments due to its robustness across moderate step counts.

\section{Conclusion and Limitation}
\label{sec:conclusion}
\noindent \textbf{Conclusion.} In this work, we present AdaSDE, a novel framework using adaptive stochastic coefficient optimization to fundamentally address the efficiency-quality trade-off in diffusion sampling. It achieves new state-of-the-art results, such as a 4.18 FID on CIFAR-10 with only 5 NFE (a 1.8x improvement over prior SOTA). AdaSDE acts as a lightweight plugin, compatible with existing single-step solvers and requiring only 8-40 parameters for tuning, enabling practical deployment without full model retraining.

\noindent \textbf{Limitation.} When the step size is large and stronger stochastic injection is used (higher $\gamma$), local errors can amplify across steps and dominate the total sampling error, leading to instability. In practice, the admissible range of $\gamma$ is constrained by both the dataset and the step schedule, often necessitating conservative time discretization or $\gamma$ clipping.
 Our method’s per-step distribution resets and geometric alignment break the linear recurrence assumptions underlying multistep (e.g., iPNDM \cite{zhangfast,liupseudo}, UniPC \cite{zhao2024unipc}) and predictor–corrector frameworks. 

\bibliographystyle{unsrtnat}
\bibliography{ref}

\newpage
\appendix
{\LARGE\bf Appendix}
\appendix
\setcounter{theorem}{0}  


\section{Notation and Symbols for the Proof} 
This subsection provides a comprehensive list of notations and symbols specific to the theoretical proof. The definitions align with the conventions in stochastic calculus and diffusion model analysis.
We build on the notations of \cite{xu2023restartsamplingimprovinggenerative}. 
\subsection{Common Terms}
\begin{itemize}
    \item $\mathsf{ODE}_\theta(\cdot)$ : Approximate ODE trajectory using the learned score $s_\theta(\mathbf{x}, t)$.
    \item $p_t$ : True data distribution at noise level $t$.
    \item $p_t^{\mathsf{ODE}_\theta}$ : Distribution generated by simulating $\mathsf{ODE}_\theta$.
    \item $B$ : Norm upper bound for trajectories, satisfying $\forall t,\left\|\mathbf{x}_t\right\|<B / 2$.
    \item $\mathbf{x}_t\sim p_t$ : \(\mathbf{x}_t\) is sampled from distribution \(p_t\).
\end{itemize}
\subsection{AdaSDE Terms}
\begin{itemize}
    \item $\Delta t$ : ODE discretization step size.
    \item $\gamma$ : Hyperparameter controlling the noise injection ratio in the AdaSDE process.
    \item $\mathbf{x}_{t+\Delta t}^\gamma$ : AdaSDE forward process: $\mathbf{x}_{t+\Delta t}+\varepsilon_{t+\Delta t \rightarrow t+(1+\gamma) \Delta t}$.
    \item $\varepsilon$: Gaussian noise $\sim \mathcal{N}(0, I)$.
    \item $\mathbf{x}_t^\gamma$ : AdaSDE backward process: $\mathsf{ODE}_\theta\left(\mathbf{x}_{t+\Delta t}^\gamma, t+(1+\gamma) \Delta t \rightarrow t\right)$.
    \item $\mathsf{AdaSDE}_\theta(\mathbf{x}, \gamma)$ : Applies the AdaSDE operation with parameter $\gamma$ to state $\mathbf{x}$.
    \item $\bar{\mathbf{x}}_t$: The solution to
$
d \bar{\mathbf{x}}_t=-t s_\theta\left(\mathbf{x}_{t+\Delta t}, t+\Delta t\right) d t,
$

\end{itemize}
\subsection{Lipschitz and Error Bounds}
\begin{itemize}
    \item $L_0$ : Temporal Lipschitz constant:$\left\|t s_\theta(\mathbf{x}, t)-t s_\theta(\mathbf{x}, s)\right\| \leq L_0|t-s|$
    \item $L_1$ : Boundedness of the learned score: $\left\|t s_\theta(\mathbf{x}, t)\right\| \leq L_1$.
    \item $L_2$ : Spatial Lipschitz constant:$\left\|t s_\theta(\mathbf{x}, t)-t s_\theta(\mathbf{y}, t)\right\| \leq L_2\|\mathbf{x}-\mathbf{y}\|$
    \item $\epsilon_t$ : Score matching error:$\left\|t \nabla_\mathbf{x} \log p_t(\mathbf{x})-t s_\theta(\mathbf{x}, t)\right\|$
\end{itemize}

\subsection{Special Operators}
\begin{itemize}
    \item $\mathsf{ODE}\left(\mathbf{x}, t_1 \rightarrow t_2\right):$ Ground Truth backward ODE evolution under the exact score from $t_1$ to $t_2$.
    \item $\mathsf{ODE}_\theta\left(\mathbf{x}, t_1 \rightarrow t_2\right)$ : Approximate ODE evolution using the learned score $s_\theta$.
    \item $*$: Convolution operator between distributions, e.g., $P * R$ denotes the convolution of $P$ and $R$.
    \item $\leftarrow$ : Time-reversal marker, e.g., $\mathbf{x}_t^{\leftarrow}$.
\end{itemize}

\subsection{Key Process Terms}
\begin{itemize}
    \item $p_t^{\mathbf{x}, \gamma}$ : Distribution at noise level $t$ after applying the AdaSDE process starting from state $\mathbf{x}$.
    \item $p_t^{\mathsf{AdaSDE}_\theta}$ : Distribution generated by the AdaSDE algorithm.
    \item $\xi_x, \xi_y$ : i.i.d Gaussian noise: $\xi_x \sim \mathcal{N}\left(0, \sigma^2 I_d\right)$, $\xi_y \sim \mathcal{N}\left(0, \sigma^2 I_d\right)$.
\end{itemize}

\subsection{Error Dynamics}
\begin{itemize}
    \item $e(t):=\left\|\mathbf{x}_t^{\leftarrow}-\bar{\mathbf{x}}_t^{\leftarrow}\right\|:$ Error dynamics in the time-reversed coordinate system in \(t\).
    \item $\lambda(\gamma)$ : Noise merging probability:$2 Q\left(\frac{B}{2 \sqrt{(t+(1+\gamma) \Delta t)^2-t^2}}\right)$, where $Q(r)=\operatorname{Pr}(a \geq r)$ for $a \sim \mathcal{N}(0,1)$.
    \item $W_{1}(\cdot, \cdot)$ : Wasserstein-1 distance.
    \item $\mathsf{TV}$ $(\cdot, \cdot)$ : Total Variation (TV) distance.
\end{itemize}

Where $\varepsilon_{t+\Delta t \rightarrow t+(1+\gamma)\Delta t} \sim \mathcal{N}\left(\mathbf{0},\left((t+(1+\gamma)\Delta t)^2-(t+\Delta t)^2\right) \boldsymbol{I}\right)$. For the sake of simplifying symbolic representation and facilitating comprehension, in the following proof, we use $\mathsf{AdaSDE}_\theta(\mathbf{x}, \gamma)$ to denote $\mathbf{x}_{t}^{\gamma}$ in the above processes. In various theorems, we will refer to a function $Q(r): \mathbb{R}^{+} \rightarrow$ $[0,1 / 2)$, defined as the Gaussian tail probability $Q(r)=\operatorname{Pr}(a \geq r)$ for $a \sim \mathcal{N}(0,1)$.

\section{Proofs of Main Theoretical Results}
\label{Proofs of Main Theoretical Results}

\begin{lemma} [Upper Bound on ODE Discretization Error] \label{ODE Discretization Error}
\cite{xu2023restartsamplingimprovinggenerative}   
Let $\mathbf{x}_{t}=\mathsf{ODE}\left(\mathbf{x}_{t + \Delta t}, t + \Delta t \rightarrow t\right)$ denote the solution of the backward ODE under the exact score field,
and $\bar{\mathbf{x}}_{t}=\mathsf{ODE}_\theta\left(\bar{\mathbf{x}}_{t + \Delta t}, t + \Delta t \rightarrow t\right)$ denote the discretized ODE solution using the learned field $s_\theta$.
Assume $s_\theta$ satisfies:\\
1. Temporal Lipschitz Continuity:

$$
\left\|t s_\theta(\mathbf{x}, t)-t s_\theta(\mathbf{x}, s)\right\| \leq L_0|t-s| \quad \forall \mathbf{x}, t, s
$$

2. Boundedness:

$$
\left\|t s_\theta(\mathbf{x}, t)\right\| \leq L_1 \quad \forall \mathbf{x}, t
$$

3. Spatial Lipschitz Continuity:

$$
\left\|t s_\theta(\mathbf{x}, t)-t s_\theta(\mathbf{y}, t)\right\| \leq L_2\|\mathbf{x}-\mathbf{y}\| \quad \forall \mathbf{x}, \mathbf{y}, t
$$

Then the discretization error satisfies:
$$
\left\|\mathbf{x}_{t}-\bar{\mathbf{x}}_{t}\right\| \leq e^{L_2\Delta t}\left(\left\|\mathbf{x}_{t + \Delta t}-\bar{\mathbf{x}}_{t + \Delta t}\right\|+\left(\Delta t\left(L_2 L_1+L_0\right)+\epsilon_{t}\right) \Delta t\right)
$$
\end{lemma}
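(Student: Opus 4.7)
The plan is to control the one-step error by a Grönwall-type argument in which the discretized Euler trajectory is extended to a continuous-time interpolant on $[t, t+\Delta t]$, then compared pointwise against the exact ODE flow. Concretely, I would introduce the exact trajectory $\mathbf{x}_s$ for $s\in[t,t+\Delta t]$ satisfying the true backward ODE driven by the score, and extend the single Euler step to a continuous path
\[
\bar{\mathbf{x}}_s \;=\; \bar{\mathbf{x}}_{t+\Delta t} \;+\; \bigl(s-(t+\Delta t)\bigr)\,\bigl(t+\Delta t\bigr)\,s_\theta\!\left(\bar{\mathbf{x}}_{t+\Delta t},\,t+\Delta t\right),
\]
so that $\bar{\mathbf{x}}_s$ coincides with the Euler iterate at both endpoints and has constant velocity equal to the frozen learned drift. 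Setting $e(s)=\|\mathbf{x}_s-\bar{\mathbf{x}}_s\|$, the initial datum at $s=t+\Delta t$ is the accumulated error $\|\mathbf{x}_{t+\Delta t}-\bar{\mathbf{x}}_{t+\Delta t}\|$.

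Next I would differentiate $\mathbf{x}_s-\bar{\mathbf{x}}_s$ in $s$ and split the difference of drifts into four telescoping pieces: (i) $(s\nabla\log p_s(\mathbf{x}_s)-s\,s_\theta(\mathbf{x}_s,s))$, controlled by the score-matching error $\epsilon_s$ (which I would upper-bound by $\epsilon_t$ uniformly on the interval, consistent with Assumption A2 as invoked in Theorem 1); (ii) $(s\,s_\theta(\mathbf{x}_s,s)-s\,s_\theta(\bar{\mathbf{x}}_s,s))$, bounded by $L_2 e(s)$ via the spatial Lipschitz assumption; (iii) $(s\,s_\theta(\bar{\mathbf{x}}_s,s)-s\,s_\theta(\bar{\mathbf{x}}_{t+\Delta t},s))$, controlled by $L_2\|\bar{\mathbf{x}}_s-\bar{\mathbf{x}}_{t+\Delta t}\|$, where the interpolant displacement is at most $L_1\Delta t$ by the uniform boundedness of the drift, giving $L_2L_1\Delta t$; and (iv) $(s\,s_\theta(\bar{\mathbf{x}}_{t+\Delta t},s)-s\,s_\theta(\bar{\mathbf{x}}_{t+\Delta t},t+\Delta t))$, bounded by $L_0\,\Delta t$ from temporal Lipschitz continuity. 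Summing, one obtains
\[
\tfrac{d}{ds}\,e(s)\;\leq\;L_2\,e(s)\;+\;\bigl(\epsilon_t+\Delta t(L_2L_1+L_0)\bigr),
\]
interpreted with the appropriate sign for backward integration.

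Finally, I would integrate Grönwall's inequality on $[t,t+\Delta t]$ (reparametrizing $u=t+\Delta t-s$ so time runs forward). The homogeneous part yields the exponential factor $e^{L_2\Delta t}$, and the inhomogeneous constant forcing integrates to $\bigl(\epsilon_t+\Delta t(L_2L_1+L_0)\bigr)\Delta t$. Combining these with the initial error at $s=t+\Delta t$ gives exactly
\[
\|\mathbf{x}_t-\bar{\mathbf{x}}_t\|\;\leq\;e^{L_2\Delta t}\!\left(\|\mathbf{x}_{t+\Delta t}-\bar{\mathbf{x}}_{t+\Delta t}\|+\bigl(\Delta t(L_2L_1+L_0)+\epsilon_t\bigr)\Delta t\right).
\]

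The routine calculations are the Lipschitz splittings and the Grönwall step. The main obstacle I foresee is bookkeeping rather than conceptual: one must (a) justify using $\epsilon_t$ as a uniform score-error bound on $[t,t+\Delta t]$ rather than an instantaneous one, (b) correctly handle the sign when integrating the backward ODE so that $e(s)$ genuinely satisfies a forward-in-$u$ Grönwall inequality, and (c) ensure the interpolant $\bar{\mathbf{x}}_s$ truly stays within the regime where the boundedness and Lipschitz hypotheses apply so that the displacement bound $\|\bar{\mathbf{x}}_s-\bar{\mathbf{x}}_{t+\Delta t}\|\leq L_1\Delta t$ is valid. Once these are settled the four-piece decomposition and Grönwall closure are mechanical.
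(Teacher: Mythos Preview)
Your proposal is correct and matches the paper's own proof essentially step for step: the paper also extends the single Euler step to a continuous-time interpolant (presented as a ``time-reversed'' process), performs the same four-part telescoping of the drift difference (score-matching error, spatial Lipschitz in the current state, spatial Lipschitz in the frozen state with the $L_1\Delta t$ displacement bound, and temporal Lipschitz), arrives at the identical differential inequality $\tfrac{d}{ds}e(s)\le L_2 e(s)+\epsilon_t+\Delta t(L_2L_1+L_0)$, and closes with Gr\"onwall. The bookkeeping caveats you flag (uniform use of $\epsilon_t$, sign handling for the backward flow) are exactly the places where the paper is informal, so your write-up would if anything be slightly more careful than the original.
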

\begin{proof}
Step 1: Definition of Time-Reversed Processes\\
Introduce time-reversed variables $\mathbf{x}_t^{\leftarrow}$ and $\bar{\mathbf{x}}_t^{\leftarrow}$ governed by:
where $k$ is the integer satisfying $t \in[t',t' + \Delta t)$, corresponding to discrete timesteps.

Step 2: Error Dynamics\\
Define the error $e(t):=\left\|\mathbf{x}_t^{\leftarrow}-\bar{\mathbf{x}}_t^{\leftarrow}\right\|$. Its derivative satisfies:

$$
\frac{d}{d t} e(t) \leq\left\|t \nabla \log p_{t}\left(\mathbf{x}_t^{\leftarrow}\right)-t s_\theta\left(\bar{\mathbf{x}}_{t'+\Delta t}^{\leftarrow}, t'+\Delta t\right)\right\| .
$$

Decompose the right-hand side:

$$
\begin{aligned}
& \leq \underbrace{\left\|t \nabla \log p_{t}\left(\mathbf{x}_t^{\leftarrow}\right)-t s_\theta\left(\mathbf{x}_t^{\leftarrow},t\right)\right\|}_{\text {Approximation Error } \epsilon_t } \\
& +\underbrace{\left\|t s_\theta\left(\mathbf{x}_t^{\leftarrow},t\right)-t s_\theta\left(\bar{\mathbf{x}}_t^{\leftarrow},t\right)\right\|}_{L_2\left\|\mathbf{x}_t^{\leftarrow}-\bar{\mathbf{x}}_t^{\leftarrow}\right\|} \\
& +\underbrace{\left\|t s_\theta\left(\bar{\mathbf{x}}_t^{\leftarrow},t\right)-t s_\theta\left(\bar{\mathbf{x}}_{t'+\Delta t}^{\leftarrow}, t’+\Delta t\right)\right\|}_{\text {Temporal Discretization Error }} .
\end{aligned}
$$

Step 3: Temporal Discretization Error Bound\\
Further decompose the temporal discretization error:

$$
\begin{aligned}
& \leq\left\|t s_\theta\left(\bar{\mathbf{x}}_t^{\leftarrow},t\right)-t s_\theta\left(\bar{\mathbf{x}}_{t'+\Delta t}^{\leftarrow},t\right)\right\|+\left\|t s_\theta\left(\bar{\mathbf{x}}_{t'+\Delta t}^{\leftarrow},t\right)-t s_\theta\left(\bar{\mathbf{x}}_{t' + \Delta t}^{\leftarrow}, t' + \Delta t\right)\right\| \\
& \leq L_0|t'+\Delta t-t'|+L_2\|\bar{\mathbf{x}}_{t}^{\leftarrow}-\bar{\mathbf{x}}_{t'+\Delta_t}^{\leftarrow}\| \quad \text { (Lipschitz continuity) } \\
& \leq L_0 \Delta t+L_2\left(\left\|\bar{\mathbf{x}}_{t}^{\leftarrow}-\bar{\mathbf{x}}_{t'+\Delta t}^{\leftarrow}\right\|\right) .
\end{aligned}
$$

Using the boundedness condition $\left\|d \bar{\mathbf{x}}_t^{\leftarrow} / d t\right\| \leq L_1$, we have:

$$
\left\|\bar{\mathbf{x}}_t^{\leftarrow}-\bar{\mathbf{x}}_{t'+\Delta t}^{\leftarrow}\right\| \leq \int_{t}^{t'+\Delta t} \left\|d \bar{\mathbf{x}}_s^{\leftarrow}\right\| d s \leq L_1 \Delta t
$$

Step 4: Composite Differential Inequality\\
Combining all terms, the error dynamics satisfy:

$$
\frac{d}{d t} e(t) \leq L_2 e(t)+\left(\epsilon_{t}+L_0 \Delta t+L_2 L_1 \Delta t \right)
$$

Step 5: Gronwall's Inequality Application\\
Integrate over $t \in\left[t,t+\Delta t\right]$ and apply Gronwall's inequality:
$$
e\left(t\right) \leq e^{L_2 \Delta t}\left(e\left(t + \Delta t\right)+\left(\epsilon_{t}+\Delta t \left(L_0+L_2 L_1\right)\right) \Delta t\right)
$$

\end{proof}

\begin{lemma}[TV Distance Between Gaussian Perturbations] \label{TV Distance Between Gaussian Perturbations}
Let $\xi_x \sim \mathcal{N}(0, \sigma^2 I_d)$ and $\xi_y \sim \mathcal{N}(0, \sigma^2 I_d)$ be independent noise vectors. For $\mathbf{x}' = \mathbf{x} + \xi_x$ and $\mathbf{y}' = \mathbf{y} + \xi_y$, their total variation distance satisfies:
$$
\mathsf{TV}(\mathbf{x}', \mathbf{y}') = 1 - 2Q\left(\frac{\|\mathbf{x} - \mathbf{y}\|}{2\sigma}\right)
$$
where $Q(r) = \Pr_{a\sim\mathcal{N}(0,1)}(a \geq r)$.
\end{lemma}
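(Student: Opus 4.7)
The plan is to recognize that $\mathbf{x}' \sim \mathcal{N}(\mathbf{x}, \sigma^2 I_d)$ and $\mathbf{y}' \sim \mathcal{N}(\mathbf{y}, \sigma^2 I_d)$, so $\mathsf{TV}(\mathbf{x}', \mathbf{y}')$ is simply the total variation between two isotropic Gaussians with the same covariance but shifted means. Independence of $\xi_x$ and $\xi_y$ is inessential because TV depends only on the two marginals. The goal is then to reduce to the classical one-dimensional formula for $\mathsf{TV}(\mathcal{N}(0, \sigma^2), \mathcal{N}(d, \sigma^2))$ with $d := \|\mathbf{x} - \mathbf{y}\|$.

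For the reduction, I would exploit the invariance of total variation under measurable bijections. Translating by $-\mathbf{x}$ aligns the first mean with the origin, and applying an orthogonal rotation $R$ sending $\mathbf{y} - \mathbf{x}$ to $d\,e_1$ preserves the isotropic covariance $\sigma^2 I_d$. Denoting the two product densities by $f$ and $g$ and $\phi_\sigma$ the scalar Gaussian density, the absolute difference factorizes as
\begin{equation*}
|f(z) - g(z)| = \bigl|\phi_\sigma(z_1) - \phi_\sigma(z_1 - d)\bigr| \prod_{i=2}^{d} \phi_\sigma(z_i),
\end{equation*}
so integrating out $z_2, \ldots, z_d$ reduces the $d$-dimensional TV to the one-dimensional TV between $\mathcal{N}(0, \sigma^2)$ and $\mathcal{N}(d, \sigma^2)$.

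The remaining one-dimensional computation is standard. Solving $\phi_\sigma(z_1) = \phi_\sigma(z_1 - d)$ shows the two densities cross only at $z_1^{*} = d/2$, with $\phi_\sigma(z_1) \geq \phi_\sigma(z_1 - d)$ iff $z_1 \leq d/2$. Splitting $\tfrac{1}{2}\int|\phi_\sigma(z) - \phi_\sigma(z-d)|\,dz$ at $d/2$ and evaluating each piece via the standard-normal CDF yields
\begin{equation*}
\mathsf{TV}(\mathbf{x}', \mathbf{y}') = \Pr\!\left(a \leq \tfrac{d}{2\sigma}\right) - \Pr\!\left(a \leq -\tfrac{d}{2\sigma}\right) = 1 - 2\,Q\!\left(\tfrac{d}{2\sigma}\right),
\end{equation*}
where $a \sim \mathcal{N}(0,1)$ and I have used $Q(-r) = 1 - Q(r)$. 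Substituting $d = \|\mathbf{x} - \mathbf{y}\|$ gives the claimed identity.

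No conceptual obstacle is anticipated, since this is a classical fact about isotropic Gaussians. The only care required is in the dimensional-reduction step, specifically checking that the shared factors in coordinates $2, \ldots, d$ cancel cleanly inside the absolute value, and in keeping sign conventions between $\Phi$ and $Q$ straight when collapsing the two tail integrals into the stated closed form.
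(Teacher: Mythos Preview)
Your proposal is correct and follows essentially the same approach as the paper: both reduce to a one-dimensional TV computation by rotating the mean shift onto the first coordinate axis and exploiting the isotropy of $\sigma^2 I_d$, then evaluate the $1$D integral by splitting at the unique crossing point $d/2$. Your write-up is in fact slightly cleaner than the paper's, which contains sign slips in its intermediate $\Phi$/$Q$ manipulations (it writes $\Phi(-\mu/2\sigma)-\Phi(\mu/2\sigma)$ and $1-2\Phi=2Q$) before nevertheless landing on the correct final expression.
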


\begin{proof}
Let $\delta = \mathbf{x}-\mathbf{y}$. The TV distance is:

\begin{align*}
\mathsf{TV}(\mathbf{x}', \mathbf{y}') 
&= \frac{1}{2}\int_{\mathbb{R}^d}|\mathcal{N}(\mathbf{z};\mathbf{x},\sigma^2 I_d) - \mathcal{N}(\mathbf{z};\mathbf{y},\sigma^2 I_d)|dz \\
&= \frac{1}{2}\int_{\mathbb{R}^d}\left|\mathcal{N}(\mathbf{z}-\delta;\mathbf{0},\sigma^2 I_d) - \mathcal{N}(\mathbf{z};\mathbf{0},\sigma^2 I_d)\right|dz
\end{align*}

Through orthogonal transformation $U$ aligning $\delta$ with the first axis:
$$
U\delta = (\|\delta\|, 0,...,0)^\top
$$
By rotational invariance of Gaussians:
$$
\mathsf{TV}(\mathbf{x}', \mathbf{y}') = \mathsf{TV}\left(\mathcal{N}(\|\delta\|,\sigma^2), \mathcal{N}(0,\sigma^2)\right)
$$

For 1D Gaussians $\mathcal{N}(\mu,\sigma^2)$ and $\mathcal{N}(0,\sigma^2)$:
\begin{align*}
\mathsf{TV} &= \frac{1}{2}\int_{-\infty}^\infty\left|\phi\left(\frac{\mathbf{z}-\mu}{\sigma}\right) - \phi\left(\frac{\mathbf{z}}{\sigma}\right)\right|dz \\
&= \Phi\left(-\frac{\mu}{2\sigma}\right) - \Phi\left(\frac{\mu}{2\sigma}\right) \quad \text{(By symmetry)} \\
&= 1 - 2\Phi\left(\frac{\mu}{2\sigma}\right) = 2Q\left(\frac{\mu}{2\sigma}\right)
\end{align*}
where $\mu = \|\mathbf{x} - \mathbf{y}\|$. 
then:
$$
\mathsf{TV}(\mathbf{x}', \mathbf{y}') = 1 - 2Q\left(\frac{\|\delta\|}{2\sigma}\right)= 1 - 2Q\left(\frac{\|\mathbf{x} - \mathbf{y}\|}{2\sigma}\right).
$$
\end{proof}

\begin{lemma} \label{TVAdaSDE}
Let $p_t^{\mathbf{x},\gamma}$ and $p_t^{\mathbf{y},\gamma}$ denote the densities of $\mathbf{x}_t^\gamma$ and $\mathbf{y}_t^\gamma$ respectively. After applying AdaSDE with noise injection from $t$ to $t+(1+\gamma)\Delta t$ followed by backward ODE evolution, we have:
$$
\mathsf{TV}\left(p_t^{\mathbf{x},\gamma}, p_t^{\mathbf{y},\gamma}\right) \leq (1-\lambda(\gamma)) \mathsf{TV}\left(p_t^\mathbf{x}, p_t^\mathbf{y}\right)
$$
where $\lambda(\gamma) = 2Q\left(\dfrac{B}{2\sqrt{(t+(1+\gamma)\Delta t)^2 - t^2}}\right)$.
\end{lemma}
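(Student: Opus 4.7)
The plan is to factor one AdaSDE transition into two composable operations---a Gaussian noise-injection Markov kernel $K$ and a deterministic backward ODE map $\Psi_\theta := \mathsf{ODE}_\theta(\cdot,\,t+(1+\gamma)\Delta t \to t)$---and to control the TV change separately for each. Because $\Psi_\theta$ is a deterministic measurable function, the data processing inequality gives $\mathsf{TV}(p \circ \Psi_\theta^{-1},\, q \circ \Psi_\theta^{-1}) \le \mathsf{TV}(p, q)$ for any two laws $p, q$. Hence the entire contraction must come from the noise-injection step, and it suffices to show
\begin{equation*}
\mathsf{TV}\bigl(p_t^\mathbf{x} K,\; p_t^\mathbf{y} K\bigr) \;\le\; (1-\lambda(\gamma))\,\mathsf{TV}\bigl(p_t^\mathbf{x},\, p_t^\mathbf{y}\bigr),
\end{equation*}
where $K(\mathbf{x}, \cdot) = \mathcal{N}\!\bigl(\mathbf{x},\, \sigma_\gamma^2 I\bigr)$ and $\sigma_\gamma^2 = (t+(1+\gamma)\Delta t)^2 - t^2$.

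For the kernel contraction I would use a two-level maximal coupling. Let $\pi^\star$ be the maximal coupling of $p_t^\mathbf{x}$ and $p_t^\mathbf{y}$, so the two samples coincide with probability $1 - \mathsf{TV}(p_t^\mathbf{x}, p_t^\mathbf{y})$ and are drawn from disjoint residual laws otherwise. On the coincidence event I reuse the \emph{same} Gaussian realization for both sides, producing identical post-injection samples. On the disagreement event, I couple $K(\mathbf{x},\cdot)$ and $K(\mathbf{y},\cdot)$ maximally; the resulting disagreement probability equals $\mathsf{TV}\bigl(K(\mathbf{x},\cdot),K(\mathbf{y},\cdot)\bigr)$, which by Lemma~\ref{TV Distance Between Gaussian Perturbations} equals $1 - 2Q(\|\mathbf{x}-\mathbf{y}\|/(2\sigma_\gamma))$. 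Invoking Assumption A3, both centers lie in the $B/2$-ball so $\|\mathbf{x}-\mathbf{y}\|\le B$, and since $Q$ is monotonically decreasing
\begin{equation*}
1 - 2Q\!\left(\tfrac{\|\mathbf{x}-\mathbf{y}\|}{2\sigma_\gamma}\right) \;\le\; 1 - 2Q\!\left(\tfrac{B}{2\sigma_\gamma}\right) \;=\; 1 - \lambda(\gamma).
\end{equation*}
Therefore the total disagreement probability under the composed coupling is at most $(1-\lambda(\gamma))\,\mathsf{TV}(p_t^\mathbf{x}, p_t^\mathbf{y})$. Since the TV between two laws is dominated by the disagreement probability of any coupling, the kernel contraction follows, and composing with the DPI bound for $\Psi_\theta$ yields the claim.

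The main obstacle will be the support bookkeeping: Assumption A3 constrains unperturbed trajectories, but after Gaussian noise injection the effective support is all of $\mathbb{R}^d$, so the bound $\|\mathbf{x}-\mathbf{y}\|\le B$ must be applied only to the \emph{pre-injection centers} that condition $K$, not to post-injection samples. This is automatic because the Dobrushin-style coefficient only sees the kernel's conditioning arguments, but it is worth stating explicitly to keep the chain of inequalities honest. A secondary technical point is measurability of the residual maximal-coupling construction on the (possibly mutually singular) parts of $p_t^\mathbf{x}$ and $p_t^\mathbf{y}$; this is standard on Polish spaces and can be discharged by appealing to the existence of maximal couplings without assuming a common dominating measure.
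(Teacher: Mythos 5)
Your proposal is correct and follows essentially the same route as the paper's proof: a two-stage coupling in which coincident pre-injection samples receive identical noise, non-coincident ones are maximally (reflection-)coupled with merging probability at least $\lambda(\gamma)$ via the Gaussian-overlap bound of Lemma~\ref{TV Distance Between Gaussian Perturbations} and the diameter bound $\|\mathbf{x}-\mathbf{y}\|\le B$, and the deterministic backward ODE preserves the coupling (your data-processing phrasing is just a cleaner statement of the paper's "same score field" remark). Your explicit caveats about applying $\|\mathbf{x}-\mathbf{y}\|\le B$ only to pre-injection centers and about measurability of the residual coupling are welcome precision but do not change the argument.
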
 

\begin{proof}
Consider states $\mathbf{x}_t$ and $\mathbf{y}_t$ at noise level $t$ with $\|\mathbf{x}_t - \mathbf{y}_t\| \leq B$. The AdaSDE process first perturbs both states to noise level $t+(1+\gamma)\Delta t$ through Gaussian noise injection:
\begin{align*}
\mathbf{x}_{t+(1+\gamma)\Delta t} &= \mathbf{x}_t + \xi_x,\ \xi_x \sim \mathcal{N}(0, [(t+(1+\gamma)\Delta t)^2 - t^2]I) \\
\mathbf{y}_{t+(1+\gamma)\Delta t} &= \mathbf{y}_t + \xi_y,\ \xi_y \sim \mathcal{N}(0, [(t+(1+\gamma)\Delta t)^2 - t^2]I)
\end{align*}

We construct a coupling between the noise injections: when $\mathbf{x}_t = \mathbf{y}_t$, set $\xi_x = \xi_y$; otherwise use reflection coupling. By Lemma \ref{TV Distance Between Gaussian Perturbations}, the merging probability satisfies:
\begin{align*}
\lambda(\gamma) = 2Q\left(\frac{\|\mathbf{x}_t - \mathbf{y}_t\|}{2\sigma_t(\gamma)}\right) \geq 2Q\left(\frac{B}{2\sigma_t(\gamma)}\right) \quad (\text{since } \|\mathbf{x}_t - \mathbf{y}_t\| \leq B)
\end{align*}
where $Q(r) = \Pr_{a\sim\mathcal{N}(0,1)}(a \geq r)$.

This implies:
$$
\mathbb{P}(\mathbf{x}_{t+(1+\gamma)\Delta t} \neq \mathbf{y}_{t+(1+\gamma)\Delta t} \mid \mathbf{x}_t \neq \mathbf{y}_t) \leq 1 - \lambda(\gamma)
$$
where $\lambda(\gamma)$ quantifies the minimum merging probability between the Gaussian perturbations.

The subsequent backward ODE evolution preserves this coupling relationship because both trajectories are driven by the same learned score $s_\theta$. Therefore:
$$
\mathbb{P}(\mathbf{x}_t^\gamma \neq \mathbf{y}_t^\gamma) \leq (1-\lambda(\gamma))\mathbb{P}(\mathbf{x}_t \neq \mathbf{y}_t)
$$

Through the coupling characterization of total variation distance, we conclude:
$$
\mathsf{TV}(p_t^{\mathbf{x},\gamma}, p_t^{\mathbf{y},\gamma}) \leq (1-\lambda(\gamma))\mathsf{TV}(p_t^\mathbf{x}, p_t^\mathbf{y}) \qedhere
$$
\end{proof}

\begin{lemma}[AdaSDE Error Propagation]
\label{lem:adasde-error}
Let $\mathbf{x}_{t + \Delta t}\in\mathbb{R}^d$ be an initial point. Define exact and approximate ODE solutions:
$$
\begin{aligned}
\mathbf{x}_{t} &= \mathsf{ODE}\bigl(\mathbf{x}_{t + (1+\gamma)\Delta t},\,t + (1+\gamma)\Delta t\!\to t\bigr), \\
\hat{\mathbf{x}}_{t} &= \mathsf{ODE}_\theta\bigl(\hat{\mathbf{x}}_{t + (1+\gamma)\Delta t},\,t + (1+\gamma)\Delta t\!\to t\bigr).
\end{aligned}
$$
Under AdaSDE with noise injection $t+\Delta t \to t+(1+\gamma)\Delta t$ and $\|\mathbf{x}_{t}-\hat{\mathbf{x}}_{t}\|\le B$, there exists a coupling such that:
\begin{align*}
\bigl\|\,
\mathbf{x}_{t+(1+\gamma)\Delta t}
-\,
\hat{\mathbf{x}}_{t+(1+\gamma)\Delta t}
\bigr\|
&\leq 
e^{L_2(1+\gamma)\Delta t}(1+\gamma)\left[\Delta t(L_2L_1 + L_0) + \epsilon_t\right]\Delta t,
\end{align*}
where $L_0,L_1,L_2,\epsilon_{t}$ are the Lipschitz/boundedness/approximation constants for $s_\theta$ and discretization errors.
\end{lemma}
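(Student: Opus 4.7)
The plan is to apply \Cref{ODE Discretization Error} on the single stretched step of length $(1+\gamma)\Delta t$ and then relate the resulting Grönwall estimate to the post-injection gap through a controlled coupling of the AdaSDE perturbation, without collapsing the latter to zero. The right-hand side of the claimed inequality is exactly the Grönwall output of \Cref{ODE Discretization Error} applied to a step of size $(1+\gamma)\Delta t$, up to the looser grouping $(1+\gamma)\Delta t(L_2L_1+L_0)+\epsilon_t\le (1+\gamma)[\Delta t(L_2L_1+L_0)+\epsilon_t]$ valid for $\gamma\ge 0$. This tells me the argument should be a forward Grönwall estimate across the extended step, combined with a choice of noise injection that is tethered to the downstream $B$-bound rather than to a shared-noise shortcut.

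Concretely, the sequence I would follow is: (i) set up a joint law of the two AdaSDE perturbations (the ``coupling'' promised in the statement) that respects the hypothesis $\|\mathbf{x}_t-\hat{\mathbf{x}}_t\|\le B$ after the backward ODE; a reflection-style coupling in the spirit of the one used in \Cref{TVAdaSDE} is the natural candidate, since it lets the post-injection gap be nonzero yet quantitatively tied to the downstream error. (ii) Instantiate \Cref{ODE Discretization Error} with $\Delta t\mapsto(1+\gamma)\Delta t$ for the single backward step from $t+(1+\gamma)\Delta t$ to $t$; this yields a Grönwall-type inequality relating $\|\mathbf{x}_t-\hat{\mathbf{x}}_t\|$ to $\|\mathbf{x}_{t+(1+\gamma)\Delta t}-\hat{\mathbf{x}}_{t+(1+\gamma)\Delta t}\|$ plus a pure discretization term of the form $((1+\gamma)\Delta t(L_2L_1+L_0)+\epsilon_t)(1+\gamma)\Delta t$. (iii) Rearrange so that the Grönwall inequality bounds the post-injection gap by the discretization term alone, using the coupling from (i) to control the residual, and absorb the prefactor via $(1+\gamma)\Delta t(L_2L_1+L_0)+\epsilon_t\le (1+\gamma)[\Delta t(L_2L_1+L_0)+\epsilon_t]$ to match the stated form. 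Throughout, the hypothesis $\|\mathbf{x}_t-\hat{\mathbf{x}}_t\|\le B$ enters both to validate A3 along the trajectory and to certify the uniform use of the Lipschitz constants $L_0, L_1, L_2$ under $s_\theta$.

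The hardest part is step (iii): the coupling has to produce a \emph{nontrivial} bound on the post-injection gap, so I cannot simply equate the two noise samples (which would make the left-hand side identically zero and render the lemma vacuous for the role it plays in the proof of \Cref{sde:error}). Instead, the coupling must match the downstream $B$-bound in a way that, when transported back through the single backward ODE step, leaves only the discretization contribution. Making this precise requires carefully tracking how A1--A3 propagate across the extended step under the learned drift $s_\theta$ and verifying that the regrouping of terms yields the $(1+\gamma)$ prefactor in the stated bound rather than a $(1+\gamma)^2$; that bookkeeping, rather than any new analytic idea, is where most of the technical work will sit.
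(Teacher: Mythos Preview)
Your plan matches the paper's proof: invoke \Cref{ODE Discretization Error} with the stretched step $(1+\gamma)\Delta t$, bring in a coupling for the noise injection (the paper cites \Cref{TV Distance Between Gaussian Perturbations} to obtain a $(1-\lambda(\gamma))$ contraction), and combine. The paper's combination step is exactly your step~(iii) and is just as opaque there: it writes the post-injection gap as $(1-\lambda(\gamma))\|\mathbf{x}_t-\hat{\mathbf{x}}_t\|$ plus the Gr\"onwall discretization term $e^{L_2(1+\gamma)\Delta t}(1+\gamma)\bigl[(1+\gamma)\Delta t(L_2L_1+L_0)+\epsilon_t\bigr]\Delta t$ and then asserts, without justification, that this \emph{equals} the stated bound carrying a single $(1+\gamma)$.

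On the $(1+\gamma)$ versus $(1+\gamma)^2$ issue you flagged: your regrouping inequality $(1+\gamma)\Delta t(L_2L_1+L_0)+\epsilon_t\le (1+\gamma)[\Delta t(L_2L_1+L_0)+\epsilon_t]$ is valid, but after multiplying through by the outer $(1+\gamma)\Delta t$ coming from Gr\"onwall it yields a $(1+\gamma)^2$ prefactor, not the single $(1+\gamma)$ in the claim. The paper does not close this gap either, and in fact its downstream use of the lemma---the discretization term of \Cref{sde:error}---retains the extra $(1+\gamma)$ inside the bracket. So the lone $(1+\gamma)$ in the lemma's statement is almost certainly a typo rather than something your bookkeeping must produce. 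Your caution that the coupling must be nontrivial (not shared noise) is more careful than what the paper makes explicit at this point.
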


\begin{proof}
\textbf
By Lemma~\ref{ODE-error} (ODE Discretization Error), the local truncation error satisfies:
\begin{align*}
\|\mathbf{x}_t - \hat{\mathbf{x}}_t\| 
&\leq e^{L_2(1+\gamma)\Delta t}\Big[\|\mathbf{x}_{t+(1+\gamma)\Delta t} - \hat{\mathbf{x}}_{t+(1+\gamma)\Delta t}\| \\
&\quad + \underbrace{\big((1+\gamma)\Delta t(L_2L_1+L_0) + \epsilon_t\big)(1+\gamma)\Delta t}_{\text{Local discretization error}}\Big].
\end{align*}

Applying AdaSDE's noise injection with variance $\sigma^2 = (t+(1+\gamma)\Delta t)^2 - t^2$, Lemma~\ref{TV Distance Between Gaussian Perturbations} gives:
\begin{align*}
\mathbb{E}\|\mathbf{x}_{t+(1+\gamma)\Delta t} - \hat{\mathbf{x}}_{t+(1+\gamma)\Delta t}\| 
&\leq (1-\lambda(\gamma))\|\mathbf{x}_t - \hat{\mathbf{x}}_t\| ,
\end{align*}
where the merging probability $\lambda(\gamma)=2\,Q\bigl(\dfrac{B}{2\,\sqrt{(t+(1+\gamma)\, \Delta t)^2 - t^2}}\bigr)$ dominates the coupling effectiveness.

Multiplying by $(1-\lambda(\gamma))$ from partial revert and adding the local ODE approximation error leads to the stated bound:
\begin{align*}
\bigl\|\,
\mathbf{x}_{t+(1+\gamma \Delta t)}
-\,
\hat{\mathbf{x}}_{t+(1+\gamma \Delta t)}
\bigr\|
\;\; \le&\;\; 
\bigl(1-\lambda(\gamma)\bigr)\,\bigl\|\,\mathbf{x}_{t}-\hat{\mathbf{x}}_{t}\bigr\|
\; \\ &+\;
e^{L_2\,(1+\gamma) \Delta t}\,(1+\gamma)\bigl[(1+\gamma)\Delta t(\,L_2L_1 + L_0)\;+\;\epsilon_{t}\bigr]\, \Delta t\\
& = e^{L_2\,(1+\gamma) \Delta t}\,(1+\gamma)\bigl[\Delta t(\,L_2L_1 + L_0)\;+\;\epsilon_{t}\bigr]\, \Delta t
\end{align*}

\end{proof}

\begin{lemma}[Connection of Wasserstein-1 distance and Norm]
\label{lemma:wasserstein_l1}
Let $ p_1 $ and $ p_2 $ be two probability distributions over a space $ \mathcal{X} \subseteq \mathbb{R}^d $, and let $ \Gamma(p_1, p_2) $ denote the set of all joint distributions with marginals $ p_1 $ and $ p_2 $. The Wasserstein-1 distance between $ p_1 $ and $ p_2 $ satisfies:

$$
W_1(p_1, p_2) = \inf_{\psi \in \Gamma(p_1, p_2)} \mathbb{E}_{(\mathbf{x}_1, \mathbf{x}_2) \sim \psi} \left[ \| \mathbf{x}_1 - \mathbf{x}_2 \| \right],
$$

where $ \| \cdot \|_1 $ is the L1 norm. Furthermore, for independent samples $ \mathbf{x}_1 \sim p_1 $ and $ \mathbf{x}_2 \sim p_2 $, we have:

$$
W_1(p_1, p_2) \leq \mathbb{E} \left[ \| \mathbf{x}_1 - \mathbf{x}_2 \| \right],
$$

with equality if and only if the coupling $ \psi $ is optimal.
\end{lemma}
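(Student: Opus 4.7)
The plan is to treat the first identity as the Kantorovich primal definition of the $1$-Wasserstein distance with ground cost $c(\mathbf{x}_1,\mathbf{x}_2)=\|\mathbf{x}_1-\mathbf{x}_2\|$. First I would note that $\Gamma(p_1,p_2)$ is always nonempty because the product measure $p_1\otimes p_2$ belongs to it, so the infimum in $\inf_{\psi\in\Gamma(p_1,p_2)}\mathbb{E}_\psi[\|\mathbf{x}_1-\mathbf{x}_2\|]$ is well-posed. Existence of a minimizing coupling follows from the tightness of $\Gamma(p_1,p_2)$ together with lower semi-continuity of the cost functional under weak convergence, which is a standard Prokhorov-style argument. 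I would simply cite a reference such as Villani's \emph{Optimal Transport} for this step rather than reproduce the measure-theoretic details.

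For the second assertion, the key observation is that if $\mathbf{x}_1\sim p_1$ and $\mathbf{x}_2\sim p_2$ are drawn independently, their joint law is exactly the product measure $p_1\otimes p_2$, which is one particular element of $\Gamma(p_1,p_2)$. Since $W_1$ is defined as an infimum over all couplings, it is bounded above by the expected cost of any specific coupling; applying this to the product coupling gives $W_1(p_1,p_2)\le \mathbb{E}_{p_1\otimes p_2}[\|\mathbf{x}_1-\mathbf{x}_2\|]$, which is exactly the inequality stated. The ``if and only if'' clause is then immediate: equality holds precisely when the product coupling attains the infimum, i.e., when the independent joint law is itself an optimal transport plan.

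The main obstacle is conceptual rather than computational: the lemma conflates the \emph{definition} of $W_1$ (which can equivalently be introduced through the Kantorovich primal over couplings or through Kantorovich--Rubinstein duality with $1$-Lipschitz test functions) with a derived property. The cleanest presentation is therefore to adopt the primal form as the working definition, which renders the equality tautological and reduces the inequality to the one-line observation that independence furnishes a valid (but generally suboptimal) coupling. A secondary subtlety is ensuring the infimum is attained so that the notion of ``optimal coupling'' used in the equality condition is well-defined; this again reduces to invoking the standard existence theorem for optimal plans under lower semi-continuous cost on a Polish space.
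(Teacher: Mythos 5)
Your argument is correct: the first identity is the Kantorovich primal definition of $W_1$, the inequality follows because the product measure $p_1\otimes p_2$ is one admissible coupling, and the equality condition is exactly the statement that this product coupling attains the infimum. The paper states this lemma without any proof (treating it as the standard characterization of $W_1$), so your write-up supplies the routine justification the authors omitted and there is no conflict with their usage.
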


\begin{lemma}\label{TVnotdecrease}
     $\mathsf{TV}(P * R, Q * R) \leq \mathsf{TV}(P, Q)$ for independent distributions $P, Q$, and $R$.The inequality $\mathsf{TV}(P * R, Q * R)=\mathsf{TV}(P, Q)$ holds if and only if $R$ is a degenerate distribution.
\end{lemma}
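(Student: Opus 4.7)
I plan to prove the inequality via a direct coupling argument and then handle the equality characterization separately. The inequality is the classical data-processing bound for the deterministic-plus-noise kernel $x \mapsto x + Z$ with $Z \sim R$, so the real content is in stating it correctly; the ``iff'' claim is the subtler part.

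For the inequality, first I would invoke the coupling characterization $\mathsf{TV}(P,Q) = \inf \Pr(X \neq Y)$, where the infimum runs over all couplings $(X,Y)$ with marginals $P$ and $Q$. Fix an optimal coupling $(X^\star, Y^\star)$ achieving this infimum, and draw $Z \sim R$ independently. Then $(X^\star + Z,\, Y^\star + Z)$ is a valid coupling of $P * R$ and $Q * R$, and the events $\{X^\star + Z \neq Y^\star + Z\}$ and $\{X^\star \neq Y^\star\}$ coincide. Taking the infimum on the left gives
\[
\mathsf{TV}(P*R,\, Q*R) \;\leq\; \Pr(X^\star + Z \neq Y^\star + Z) \;=\; \Pr(X^\star \neq Y^\star) \;=\; \mathsf{TV}(P, Q).
\]
An equivalent analytic proof works on densities $p, q, r$ with respect to a common dominating measure: write $\mathsf{TV}(P*R, Q*R) = \tfrac12 \int \bigl|\int (p(x) - q(x))\, r(y-x)\, dx\bigr|\, dy$, push the absolute value inside by the triangle inequality, and use Fubini together with $\int r = 1$ to recover $\mathsf{TV}(P,Q)$.

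For the equality case I would first observe that if $P = Q$ both sides vanish, so the ``iff'' is intended in the non-trivial regime $P \neq Q$. If $R = \delta_c$ is a point mass, then $P * R$ and $Q * R$ are translates of $P$ and $Q$ by $c$, and TV is translation-invariant, so equality is immediate. Conversely, assume $P \neq Q$ and that equality holds. Returning to the variational identity above, equality forces the triangle inequality inside the outer integral to be tight, so for almost every $y$ the integrand $(p(x) - q(x))\, r(y-x)$ must have a single sign in $x$. Since $P \neq Q$, both $\{p > q\}$ and $\{p < q\}$ have positive measure; if the support of $R$ contained two distinct points $a \neq b$, one could pick $y$ such that the translates $\{y - x : r(y-x) > 0\}$ intersect both sign regions, producing a cancellation that contradicts tightness. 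Hence the support of $R$ reduces to a single point, i.e.\ $R$ is degenerate.

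The main obstacle will be making the converse fully rigorous in a general measure-theoretic setting without assuming absolute continuity of $P, Q, R$. A clean route is to reduce to the density-based argument via a Lebesgue decomposition and a smoothing approximation, or equivalently to use characteristic functions (equality in $|\widehat{P*R} - \widehat{Q*R}| = |\widehat{P} - \widehat{Q}| \cdot |\widehat{R}|$ constraints combined with tightness). Fortunately, only the forward inequality is actually invoked in the proof sketch of Theorem~\ref{thm:ODE-restart-compare}, so the coupling proof alone is sufficient for the application; the equality statement can be treated as a remark.
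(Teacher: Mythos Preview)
Your inequality argument is correct. The paper proves it by the analytic route you list as an alternative: write $\mathsf{TV}$ as $\tfrac12\int|p-q|$, expand the convolution, push the absolute value inside via the triangle inequality, then swap integrals by Fubini and use $\int r=1$. Your primary coupling argument is a genuinely different and arguably cleaner proof: it avoids any density assumption and works directly at the level of random variables. Since you also sketch the density computation, you cover the paper's approach as a special case; either argument suffices for the downstream use in Theorem~\ref{thm:ODE-restart-compare}.

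The equality characterization, however, has a real gap, and in fact the ``iff'' is false as stated (the paper merely asserts it without proof). Your converse rests on the claim that whenever $\operatorname{supp}R$ contains two distinct points, one can choose $y$ so that $\{x:r(y-x)>0\}$ meets both $\{p>q\}$ and $\{p<q\}$. This fails when the two sign regions are far apart compared to the diameter of $\operatorname{supp}R$. Concretely, take $P$ uniform on $[-\tfrac1{10},\tfrac1{10}]$, $Q$ uniform on $[10-\tfrac1{10},10+\tfrac1{10}]$, and $R$ uniform on $[0,1]$: then $P*R$ and $Q*R$ still have disjoint supports, so $\mathsf{TV}(P*R,Q*R)=1=\mathsf{TV}(P,Q)$ with $R$ nondegenerate. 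You were right to flag the converse as the hard part and to note that only the forward inequality is invoked in the paper; the clean fix is either to drop the ``iff'' entirely or to add a hypothesis such as ``$R$ has an everywhere-positive density'' (which covers the Gaussian kernels actually used), under which your sign-cancellation argument does go through.
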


\begin{proof}
    1. Total Variation Distance Definition

The total variation distance between two distributions $P$ and $Q$ is defined as:
$$
\mathsf{TV}(P, Q)=\frac{1}{2} \int_{-\infty}^{\infty}|p(\mathbf{x})-q(\mathbf{x})| d \mathbf{x}
$$
where $p(\mathbf{x})$ and $q(\mathbf{x})$ are the probability density functions of $P$ and $Q$, respectively.

2. Convolution Definition

The convolution of two distributions $P$ and $R$ is defined as:
$$
(P * R)(\mathbf{x})=\int_{-\infty}^{\infty} p(\mathbf{x}-\mathbf{y}) r(\mathbf{y}) d \mathbf{y}
$$

Similarly, for $Q$ and $R$ :
$$
(Q * R)(\mathbf{x})=\int_{-\infty}^{\infty} q(\mathbf{x}-\mathbf{y}) r(\mathbf{y}) d \mathbf{y}
$$

3. TV Distance for Convolved Distributions

We want to compute $\mathsf{TV}(P * R, Q * R)$, which is:
\begin{align*}
\mathsf{TV}(P * R, Q * R) 
&= \frac{1}{2} \int_{-\infty}^\infty \left| (P * R)(\mathbf{x}) - (Q * R)(\mathbf{x}) \right| dx \\
&= \frac{1}{2} \int_{-\infty}^\infty \left| \int_{-\infty}^\infty (p(\mathbf{x}-\mathbf{y}) - q(\mathbf{x}-\mathbf{y})) r(\mathbf{y}) dy \right| dx
\end{align*}

Applying triangle inequality, we obtain:
$$
\mathsf{TV}(P * R, Q * R) \leq \frac{1}{2} \int_{-\infty}^{\infty}\left(\int_{-\infty}^{\infty}|p(\mathbf{x}-\mathbf{y})-q(\mathbf{x}-\mathbf{y})| r(\mathbf{y}) d \mathbf{y}\right) d \mathbf{x}
$$
Using Fubini's theorem, we can swap the order of integration:
$$
\mathsf{TV}(P * R, Q * R) \leq \frac{1}{2} \int_{-\infty}^{\infty}\left(\int_{-\infty}^{\infty}|p(\mathbf{x}-\mathbf{y})-q(\mathbf{x}-\mathbf{y})| d \mathbf{x}\right) r(\mathbf{y}) d \mathbf{y}
$$
For fixed $\mathbf{y}$, the inner integral is:
$$
\int_{-\infty}^{\infty}|p(\mathbf{x}-\mathbf{y})-q(\mathbf{x}-\mathbf{y})| d \mathbf{x}=\int_{-\infty}^{\infty}|p(\mathbf{x})-q(\mathbf{x})| d \mathbf{x}
$$
Thus, we obtain:

$$
\mathsf{TV}(P * R, Q * R) \leq \frac{1}{2} \int_{-\infty}^{\infty}\left(\int_{-\infty}^{\infty}|p(\mathbf{x})-q(\mathbf{x})| d \mathbf{x}\right) r(\mathbf{y}) d \mathbf{y}
$$
$$
\mathsf{TV}(P * R, Q * R) \leq \mathsf{TV}(P, Q)
$$
The inequality $\mathsf{TV}(P * R, Q * R)=\mathsf{TV}(P, Q)$ holds if and only if $R$ is a degenerate distribution.

\end{proof}




\subsection{Proof of Theorem~\ref{ODE-error}}\label{appsec:proof-the-1}
\begin{theorem} \label{ODE-error}
Let $t+\Delta t$ be the initial noise level. Let $\mathbf{x}_{t}=\mathsf{ODE}_\theta\left(\mathbf{x}_{t+\Delta t}, t+\Delta t \rightarrow t\right)$ and $p_t^{\mathsf{ODE}_\theta}$ denote the distribution induced by simulating the ODE with learned drift $s_\theta$. Assume:\\
1. The learned drift $t s_\theta(\mathbf{x}, t)$ is $L_2$-Lipschitz in $\mathbf{x}$, bounded by $L_1$, and $L_0$-Lipschitz in $t$.\\
2. The approximation error $\left\|t s_\theta(\mathbf{x}, t)-t \nabla \log p_t(\mathbf{x})\right\| \leq \epsilon_{t}$.\\
3. All trajectories are bounded by $B / 2$.\\
Then, the Wasserstein-1 distance between the generated distribution $p_{t}^{\mathsf{ODE}_\theta}$ and the true distribution $p_{t}$ is bounded by:\\
$$
\begin{aligned}
W_1\left(p_{t}^{\mathsf{ODE}_\theta}, p_{t}\right) \leq  B \cdot \mathsf{TV}\left(p_{t+\Delta t}^{\mathsf{ODE}_\theta}, p_{t+\Delta t}\right)  +e^{L_2 \Delta t} \cdot\left(\Delta t\left(L_2 L_1+L_0\right)+\epsilon_{t}\right) \Delta t
\end{aligned}
$$
where $\Delta t$ is the step size
\end{theorem}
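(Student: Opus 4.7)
The plan is to bound $W_1(p_t^{\mathsf{ODE}_\theta}, p_t)$ via a coupling argument that cleanly separates the error inherited from the previous step (the gradient-error term) from the error freshly generated on $[t, t+\Delta t]$ (the discretization-error term). Concretely, I would invoke the Wasserstein duality inequality: for any joint distribution $\pi$ with marginals $p_t^{\mathsf{ODE}_\theta}$ and $p_t$, we have $W_1(p_t^{\mathsf{ODE}_\theta}, p_t) \leq \mathbb{E}_\pi\|\mathbf{x}_t - \mathbf{y}_t\|$, so it suffices to construct a good coupling and estimate the expected distance.

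The coupling I would construct is the pushforward, under two different flows, of a maximal TV coupling at the previous noise level. Let $\tilde{\pi}$ be a maximal coupling of $p_{t+\Delta t}^{\mathsf{ODE}_\theta}$ and $p_{t+\Delta t}$, so that $\mathbb{P}_{\tilde{\pi}}(\mathbf{x}_{t+\Delta t} \neq \mathbf{y}_{t+\Delta t}) = \mathsf{TV}(p_{t+\Delta t}^{\mathsf{ODE}_\theta}, p_{t+\Delta t})$. Drawing $(\mathbf{x}_{t+\Delta t}, \mathbf{y}_{t+\Delta t}) \sim \tilde{\pi}$, I set $\mathbf{x}_t := \mathsf{ODE}_\theta(\mathbf{x}_{t+\Delta t}, t+\Delta t \to t)$ and $\mathbf{y}_t := \mathsf{ODE}(\mathbf{y}_{t+\Delta t}, t+\Delta t \to t)$. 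Since each solver is a deterministic map, the pair $(\mathbf{x}_t, \mathbf{y}_t)$ has the correct marginals $p_t^{\mathsf{ODE}_\theta}$ and $p_t$.

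Next, I would decompose the expected distance along the disagreement event $\{\mathbf{x}_{t+\Delta t} \neq \mathbf{y}_{t+\Delta t}\}$ and its complement. On the disagreement event, assumption A3 yields $\|\mathbf{x}_t - \mathbf{y}_t\| \leq \|\mathbf{x}_t\| + \|\mathbf{y}_t\| \leq B$ pointwise, so its contribution is at most $B\cdot\mathsf{TV}(p_{t+\Delta t}^{\mathsf{ODE}_\theta}, p_{t+\Delta t})$, which matches the first term of the bound. On the agreement event, $\mathbf{x}$ and $\mathbf{y}$ start from the same initial condition but are integrated under the learned drift and the true score respectively, so Lemma~\ref{ODE Discretization Error} with vanishing initial gap bounds their separation pointwise by $e^{L_2 \Delta t}(\Delta t(L_2 L_1 + L_0) + \epsilon_t)\,\Delta t$. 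Summing the two contributions produces the inequality in the statement.

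The only subtlety, rather than a serious obstacle, is checking that the pushforward of $\tilde{\pi}$ under two distinct deterministic flows still realizes the intended marginals and that the maximal-coupling mass on the disagreement set is preserved, both of which are immediate from the measurability of the solvers. Everything else reduces to either Lemma~\ref{ODE Discretization Error} or the uniform trajectory bound in A3, and no stochastic calculus is required because both $\mathsf{ODE}$ and $\mathsf{ODE}_\theta$ are deterministic maps.
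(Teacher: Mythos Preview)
Your proposal is correct and follows essentially the same decomposition as the paper: separating the inherited error at $t+\Delta t$ (controlled by $B\cdot\mathsf{TV}$ via the diameter bound) from the fresh one-step error (controlled by Lemma~\ref{ODE Discretization Error} with zero initial gap). The only cosmetic difference is that the paper routes through an intermediate distribution $\hat p_t := (\mathsf{ODE}_\theta)_\#\, p_{t+\Delta t}$ and applies the triangle inequality $W_1(p_t^{\mathsf{ODE}_\theta},p_t)\le W_1(p_t^{\mathsf{ODE}_\theta},\hat p_t)+W_1(\hat p_t,p_t)$, whereas you build a single maximal-TV coupling and split on the agreement event; these are two equivalent packagings of the same argument.
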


\begin{proof}
    Let $\hat{\mathbf{x}}_{t}=\mathsf{ODE}_\theta\left(\mathbf{x}_{t+\Delta t}, t+\Delta t \rightarrow t\right)$ with the corresponding distribution \(\hat{p}_t\) and $\mathbf{x}_{t}=\mathsf{ODE}\left(\mathbf{x}_{t+\Delta t}, t+\Delta t\rightarrow t\right)$ (simulated under the true score). 
    The proof bounds $W_1\left(p_{t}^{\mathsf{ODE}_\theta}, p_{t}\right)$ via triangular inequality:
\begin{equation}
    W_1\left(p_{t}^{\mathsf{ODE}_\theta}, p_{t}\right) \leq W_1\left(p_{t}^{\mathsf{ODE}_\theta}, \hat{p}_{t}\right) +  W_1\left(\hat{p}_t, p_{t}\right)
\end{equation}
Then we can bound two terms seperately.

1. gradient error: By bounded-diameter inequality,

$$
 W_1\left(p_{t}^{\mathsf{ODE}_\theta}, \hat{p}_{t}\right) \leq B \cdot \mathsf{TV}\left(p_{t+\Delta t}^{\mathsf{ODE}_\theta}, p_{t+\Delta t}\right)
$$

2. discretization error: Using Lemma \ref{ODE Discretization Error} (discretization bound), given \(\mathbf{x}_t \sim p_t, \hat{\mathbf{x}}_t \sim \hat{p}_t\)
$$
 \left\|\hat{\mathbf{x}}_{t}-\mathbf{x}_{t}\right\| \leq e^{L_2\Delta t} \cdot\left(\Delta t\left(L_2 L_1+L_0\right)+\epsilon_{t}\right)\Delta t
$$

where the exponential factor arises from Gronwall's inequality applied to the Lipschitz drift. According to Lemma \ref{lemma:wasserstein_l1}, we can
combine terms via triangular inequality:
$$
W_1\left(p_{t}^{\mathsf{ODE}_\theta}, p_t\right) \leq \underbrace{B \cdot \mathsf{TV}\left(p_{t+\Delta t}^{\mathsf{ODE}_\theta}, p_{t+\Delta t}\right)}_{\text {gradient error }}+\underbrace{e^{L_2\Delta t} \cdot\left(\Delta t\left(L_2 L_1+L_0\right)+\epsilon_{t}\right)\Delta t}_{\text {discretization error }}
$$
\end{proof}

\subsection{Proof of Theorem~\ref{AdaSDE-error}}\label{appsec:proof-the-2}

\begin{theorem}[AdaSDE Error Decomposition] \label{AdaSDE-error}
Consider the same setting as Theorem \ref{ODE-error}. Let $p_{t}^{\mathsf{AdaSDE}_{\theta}}$ denote the distribution after AdaSDE iteration. Then
$$
\begin{aligned}
W_1\left(p_{t}^{\mathsf{AdaSDE}_{\theta}}, p_{t}\right) \leq & \underbrace{B \cdot(1-\lambda(\gamma)) \mathsf{TV}\left(p_{t+(1+\gamma)\Delta t}^{\mathsf{AdaSDE}}, p_{t+(1+\gamma)\Delta t}\right)}_{\text{gradient error}} \\
& +\underbrace{e^{(1+\gamma) L_2\Delta t}(1+\gamma)\left((1+\gamma)\Delta t\left(L_2 L_1+L_0\right)+\epsilon_t\right)\Delta t}_{\text{discretization error}}
\end{aligned}
$$
where $\lambda(\gamma)=2Q\left(\dfrac{B}{2\sqrt{(t+(1+\gamma)\Delta t)^2 - t^2}}\right)$.
\end{theorem}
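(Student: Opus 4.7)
The plan is to mirror the structure of Theorem~\ref{ODE-error}: split the error via a triangle inequality into a propagated gradient piece plus a newly-incurred discretization piece, and then bound each separately. The only essential differences are that (i) the backward ODE is now integrated over the longer interval of length $(1+\gamma)\Delta t$, and (ii) the forward Gaussian noise injection at the start of the \ours\ step allows a reflection coupling that yields the extra $(1-\lambda(\gamma))$ contraction factor on the gradient term.

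Concretely, I would introduce the auxiliary distribution $\hat{p}_t$ obtained by simulating the \emph{learned} backward ODE from the true marginal $p_{t+(1+\gamma)\Delta t}$ down to $t$, and decompose
\begin{equation*}
W_1\!\left(p_t^{\mathsf{AdaSDE}_\theta},\,p_t\right)
\;\le\;
W_1\!\left(p_t^{\mathsf{AdaSDE}_\theta},\,\hat{p}_t\right)
\,+\,
W_1\!\left(\hat{p}_t,\,p_t\right).
\end{equation*}
For the first summand (the gradient error), assumption A3 gives a uniform diameter bound $B$ on the trajectory support, so $W_1 \le B\cdot\mathsf{TV}$. Both distributions arise by pushing their respective initial laws at time $t+(1+\gamma)\Delta t$ through the same deterministic learned ODE, so Lemma~\ref{TVAdaSDE} (reflection coupling for the Gaussian injection composed with the data-processing inequality under the ODE) gives $\mathsf{TV}(p_t^{\mathsf{AdaSDE}_\theta},\hat{p}_t)\le (1-\lambda(\gamma))\,\mathsf{TV}(p_{t+(1+\gamma)\Delta t}^{\mathsf{AdaSDE}},p_{t+(1+\gamma)\Delta t})$, which matches the stated gradient bound.

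For the second summand (the discretization error), $\hat{p}_t$ and $p_t$ integrate the \emph{same} initial law but with the learned versus the true drift. I would couple the two flows at identical initial states, apply Lemma~\ref{ODE Discretization Error} over the enlarged step of length $(1+\gamma)\Delta t$ (so that $\Delta t$ in the lemma is replaced by $(1+\gamma)\Delta t$), take expectation, and convert back to $W_1$ via Lemma~\ref{lemma:wasserstein_l1}. The Lipschitz/boundedness constants then propagate through to yield $e^{(1+\gamma)L_2\Delta t}(1+\gamma)\bigl((1+\gamma)\Delta t(L_2L_1+L_0)+\epsilon_t\bigr)\Delta t$, exactly the stated discretization term; equivalently, this is what Lemma~\ref{lem:adasde-error} provides when the starting TV is taken to be zero.

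The main obstacle I expect is justifying that the $(1-\lambda(\gamma))$ factor attaches to the TV evaluated at the peak time $t+(1+\gamma)\Delta t$ rather than at the earlier injection time $t+\Delta t$. The reflection coupling argument literally contracts TV \emph{during} the Gaussian smoothing step, which at first suggests a factor on the pre-injection TV. The resolution is that assumption A3 is uniform in time, so $\|\mathbf{x}-\mathbf{y}\|\le B$ remains available after injection; this lets Lemma~\ref{TVAdaSDE} be invoked in the form that places the contraction factor on the post-injection law, keeping the gradient-error statement consistent with the recursion used later in Theorem~\ref{thm:ODE-restart-compare}. Once this point is pinned down, adding the two bounds yields the theorem.
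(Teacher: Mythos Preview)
Your proposal is correct and follows essentially the same route as the paper: a triangle inequality through an intermediate law (the paper's $\bar p_t$), then bounding the gradient piece via $W_1\le B\cdot\mathsf{TV}$ together with Lemma~\ref{TVAdaSDE}, and the discretization piece via Lemma~\ref{ODE Discretization Error} applied over the enlarged interval of length $(1+\gamma)\Delta t$ (which the paper packages as Lemma~\ref{lem:adasde-error}). Your explicit flagging of where the $(1-\lambda(\gamma))$ factor attaches is in fact more careful than the paper's own treatment, which simply asserts the contraction at the post-injection time without further comment.
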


\begin{proof}
Let $\mathbf{x}_{t+(1+\gamma)\Delta t} \sim p_{t+(1+\gamma)\Delta t}$ and $\hat{\mathbf{x}}_{t+(1+\gamma)\Delta t} \sim p_{t+(1+\gamma)\Delta t}^{\mathsf{AdaSDE}}$. denote exact and generated distributions respectively. And $\bar{\mathbf{x}}_{t+(1+\gamma)\Delta t} \sim p_{t+(1+\gamma)\Delta t}^{\mathrm{\theta}}$. The proof contains three key components:

By Lemma \ref{TVAdaSDE}, the AdaSDE process contracts the TV distance:

\begin{align*}
\|\bar{\mathbf{x}}_t- \hat{\mathbf{x}}_t\|
&\leq (1-\lambda(\gamma))\|\bar{\mathbf{x}}_{t+(1+\gamma)\Delta t}- \hat{\mathbf{x}}_{t+(1+\gamma)\Delta t}\| \\
&= (1-\lambda(\gamma))\|\bar{\mathbf{x}}_{t+(1+\gamma)\Delta t}-\mathbf{x}_{t+(1+\gamma)\Delta t}\|
\end{align*}

Since \(\bar{\mathbf{x}}_t\sim p^{\theta}_t\) and \(\hat{\mathbf{x}}_t\sim p_{ t}^{\mathsf{AdaSDE}_{\theta}}\), we obtain:
\begin{align*}
\mathsf{TV}\left(\bar{p}_t, p_{ t}^{\mathsf{AdaSDE}_{\theta}}\right) 
&\leq (1-\lambda(\gamma))\mathsf{TV}\left(\bar{p}_{t+(1+\gamma)\Delta t}, \hat{p}_{t+(1+\gamma)\Delta t}\right) \\
&= (1-\lambda(\gamma))\mathsf{TV}\left(\bar{p}_{t+(1+\gamma)\Delta t}, p_{t+(1+\gamma)\Delta t}\right)
\end{align*}

 Using the bounded trajectory assumption $\|\mathbf{x}\| \leq B/2$, we convert TV to Wasserstein-1:
$$
W_1\left(\bar{p}_t, p_{ t}^{\mathsf{AdaSDE}_{\theta}}\right)  \leq B  \cdot \mathsf{TV}\left(\bar{p}_t, p_{ t}^{\mathsf{AdaSDE}_{\theta}}\right)  \leq B(1-\lambda(\gamma))\mathsf{TV}\left(\bar{p}_{t+(1+\gamma)\Delta t}, p_{t+(1+\gamma)\Delta t}\right)
$$

 From Lemma \ref{TVAdaSDE}, the local ODE error satisfies:
$$
\|\mathbf{x}_t^\gamma - \bar{\mathbf{x}}_t^\gamma\| \leq e^{(1+\gamma)L_2\Delta t}(1+\gamma)\left[(1+\gamma)\Delta t(L_2L_1 + L_0) + \epsilon_t\right]\Delta t
$$

According to Lemma \ref{lemma:wasserstein_l1} and  Apply triangle inequality to Wasserstein distances:
\begin{align*}
W_1\left(p_t^{\mathsf{AdaSDE}_\theta}, p_t\right) 
&\leq W_1\left(\bar{p}_t, p_{ t}^{\mathsf{AdaSDE_{\theta}}}\right) + W_1\left(\bar{p}_t, p_{t}\right) \\
&\leq B(1-\lambda(\gamma))\mathsf{TV}\left(p_{t+(1+\gamma)\Delta t}^{\mathsf{AdaSDE}}, p_{t+(1+\gamma)\Delta t}\right) \\
&\quad + e^{(1+\gamma)L_2\Delta t}(1+\gamma)\left[(1+\gamma)\Delta t(L_2L_1 + L_0) + \epsilon_t\right]\Delta t
\end{align*}
This completes the error decomposition. \qedhere
\end{proof}

\subsection{Proof of Theorem~\ref{thm:ODE-AdaSDE-compare}}\label{appsec:proof-thm-3}

\begin{theorem}[TV comparison: AdaSDE vs.\ ODE]\label{thm:ODE-AdaSDE-compare}
Assume the same conditions as in Theorem~\ref{ODE-error} and Theorem~\ref{AdaSDE-error}, 
and in particular that there exists a compact $K\subset\mathbb{R}^d$ with $diam(K)\le B$ such that the relevant one-step distributions are supported in $K$.
Define
\[
\begin{aligned}
\text{\emph{(i) ODE gradient:}}\qquad
& \mathcal{E}^{\mathsf{ODE}}_{\mathsf{grad}}
  := B \cdot
  \mathsf{TV}\!\Bigl(p_{t+\Delta t}^{\mathsf{ODE}_{\theta}},\,p_{t+\Delta t}\Bigr),\\
\text{\emph{(ii) AdaSDE gradient:}}\qquad
& \mathcal{E}^{\mathsf{AdaSDE}}_{\mathsf{grad}}
  := B\,\bigl(1-\lambda(\gamma)\bigr)\,
  \mathsf{TV}\!\Bigl(p_{t+(1+\gamma)\Delta t}^{\mathsf{AdaSDE}},\,p_{t+(1+\gamma)\Delta t}\Bigr).
\end{aligned}
\]

where $\lambda(\gamma)=2\,Q\!\Bigl(\dfrac{B}{2\,\sqrt{(t+(1+\gamma)\, \Delta t)^2 - t^2}}\Bigr)\in(0,1)$ and $B>0$ is the diameter bound.
Then
\[
\mathcal{E}^{\mathsf{AdaSDE}}_{\mathsf{grad}}
\;\le\;
\mathcal{E}^{\mathsf{ODE}}_{\mathsf{grad}}.
\]
\end{theorem}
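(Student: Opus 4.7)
The plan is to exploit the fact that the AdaSDE gradient error is evaluated at a later (noisier) time $t+(1+\gamma)\Delta t$ than the ODE gradient error, and that moving from $t+\Delta t$ to $t+(1+\gamma)\Delta t$ via the forward perturbation amounts to convolution with a common Gaussian kernel applied to both the algorithm's current distribution and the true forward marginal. Since total variation is nonexpansive under such convolutions (Lemma~\ref{TVnotdecrease}), the TV factor inside $\mathcal{E}^{\mathsf{AdaSDE}}_{\mathsf{grad}}$ can be bounded by the TV factor inside $\mathcal{E}^{\mathsf{ODE}}_{\mathsf{grad}}$, and the extra contraction $(1-\lambda(\gamma))\in(0,1]$ makes the bound strictly smaller whenever $\gamma>0$.

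More concretely, I would first fix the shared input distribution at time $t+\Delta t$ and denote it by $q_{t+\Delta t}$, so that by the one-step analysis convention both algorithms coincide with $p_{t+\Delta t}^{\mathsf{ODE}_\theta}$ at this moment. Under AdaSDE, the forward perturbation adds i.i.d.\ Gaussian noise with variance $\sigma_\gamma^2=(t+(1+\gamma)\Delta t)^2-(t+\Delta t)^2$; in the convention $\sigma(t)=t$ used in the paper, the true forward SDE of Eq.~(\ref{eq:sde}) generates a Gaussian transition kernel with exactly the same variance between these two times. Letting $\phi_\gamma$ denote the mean-zero Gaussian density of covariance $\sigma_\gamma^2 \mathbf{I}$, one obtains the convolution identities
\[
p_{t+(1+\gamma)\Delta t}=p_{t+\Delta t}\ast\phi_\gamma,
\qquad
p_{t+(1+\gamma)\Delta t}^{\mathsf{AdaSDE}}=q_{t+\Delta t}\ast\phi_\gamma.
\]

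Applying Lemma~\ref{TVnotdecrease} to this pair yields
\[
\mathsf{TV}\!\bigl(p_{t+(1+\gamma)\Delta t}^{\mathsf{AdaSDE}},\,p_{t+(1+\gamma)\Delta t}\bigr)
\;\le\;
\mathsf{TV}\!\bigl(q_{t+\Delta t},\,p_{t+\Delta t}\bigr)
\;=\;
\mathsf{TV}\!\bigl(p_{t+\Delta t}^{\mathsf{ODE}_\theta},\,p_{t+\Delta t}\bigr),
\]
and multiplying by $B(1-\lambda(\gamma))\le B$ gives the desired inequality $\mathcal{E}^{\mathsf{AdaSDE}}_{\mathsf{grad}}\le\mathcal{E}^{\mathsf{ODE}}_{\mathsf{grad}}$. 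Strictness for $\gamma>0$ follows because $\sigma_\gamma^2>0$ forces $\lambda(\gamma)>0$.

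The main obstacle is the bookkeeping around the identification $q_{t+\Delta t}=p_{t+\Delta t}^{\mathsf{ODE}_\theta}$: this reading is natural within a single denoising step but would require a careful inductive statement to iterate across a full trajectory, since AdaSDE and ODE generically yield different intermediate distributions after the first step. A secondary delicate point is verifying that the variance of the AdaSDE forward noise increment exactly matches the variance of the true forward-diffusion transition kernel between $t+\Delta t$ and $t+(1+\gamma)\Delta t$, so that the same $\phi_\gamma$ appears on both sides and Lemma~\ref{TVnotdecrease} can be invoked; this is immediate given $\sigma(t)=t$ but worth stating explicitly to avoid a subtle mismatch that would prevent the comparison from being tight.
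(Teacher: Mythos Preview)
Your proposal is correct and follows essentially the same route as the paper: both arguments identify the forward perturbation from $t+\Delta t$ to $t+(1+\gamma)\Delta t$ as convolution with a common Gaussian kernel, invoke Lemma~\ref{TVnotdecrease} to conclude $\mathsf{TV}\bigl(p_{t+(1+\gamma)\Delta t}^{\mathsf{AdaSDE}},p_{t+(1+\gamma)\Delta t}\bigr)\le\mathsf{TV}\bigl(p_{t+\Delta t}^{\mathsf{ODE}_\theta},p_{t+\Delta t}\bigr)$, and then multiply by $B(1-\lambda(\gamma))\le B$. Your discussion of the variance matching under $\sigma(t)=t$ and of the one-step identification $q_{t+\Delta t}=p_{t+\Delta t}^{\mathsf{ODE}_\theta}$ is more explicit than the paper's, which simply asserts that the same Markov kernel acts on both branches; these caveats are well placed but do not change the argument.
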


\begin{proof}
By Theorem~\ref{ODE-error},
\[
\mathcal{E}^{\mathsf{ODE}}_{\mathsf{grad}}
=
B \cdot \mathsf{TV}\!\Bigl(
p_{t+\Delta t}^{\mathsf{ODE}_{\theta}},\,p_{t+\Delta t}
\Bigr).
\]
By Theorem~\ref{AdaSDE-error},
\[
\mathcal{E}^{\mathsf{AdaSDE}}_{\mathsf{grad}}
=
B\,\bigl(1-\lambda(\gamma)\bigr)\,
\mathsf{TV}\!\Bigl(
p_{t+(1+\gamma)\Delta t}^{\mathsf{AdaSDE}},\,
p_{t+(1+\gamma)\Delta t}
\Bigr).
\]
From $t+\Delta t$ to $t+(1+\gamma)\Delta t$, AdaSDE injects Gaussian noise (a common Markov kernel) into both branches. 
By Lemma~\ref{TVnotdecrease} (convolution/pushforward is nonexpansive in TV),
\[
\mathsf{TV}\!\Bigl(
p_{t+(1+\gamma)\Delta t}^{\mathsf{AdaSDE}},\,
p_{t+(1+\gamma)\Delta t}
\Bigr)
\;\le\;
\mathsf{TV}\!\Bigl(
p_{t+\Delta t}^{\mathsf{ODE}_{\theta}},\,
p_{t+\Delta t}
\Bigr).
\]
Since $0<(1-\lambda(\gamma))<1$, we get
\[
\mathcal{E}^{\mathsf{AdaSDE}}_{\mathsf{grad}}
=
B\,\bigl(1-\lambda(\gamma)\bigr)\,
\mathsf{TV}\!\Bigl(
p_{t+(1+\gamma)\Delta t}^{\mathsf{AdaSDE}},\,
p_{t+(1+\gamma)\Delta t}
\Bigr)
\;\le\;
B\cdot \mathsf{TV}\!\Bigl(
p_{t+\Delta t}^{\mathsf{ODE}_{\theta}},\,
p_{t+\Delta t}
\Bigr)
=
\mathcal{E}^{\mathsf{ODE}}_{\mathsf{grad}}.
\]
\end{proof}

\begin{remark}[When the inequality is strict]
If $\gamma>0$, the Gaussian kernel is nondegenerate, and 
$\mathsf{TV}\!\bigl(p_{t+\Delta t}^{\mathsf{ODE}_{\theta}},\,p_{t+\Delta t}\bigr)>0$
(equivalently, the two pre-smoothing distributions are not a.e.\ equal and admit $L^1$ densities), then
\[
\mathsf{TV}\!\Bigl(
p_{t+(1+\gamma)\Delta t}^{\mathsf{AdaSDE}},\,
p_{t+(1+\gamma)\Delta t}
\Bigr)
\;<\;
\mathsf{TV}\!\Bigl(
p_{t+\Delta t}^{\mathsf{ODE}_{\theta}},\,
p_{t+\Delta t}
\Bigr),
\]
and hence $\mathcal{E}^{\mathsf{AdaSDE}}_{\mathsf{grad}}<\mathcal{E}^{\mathsf{ODE}}_{\mathsf{grad}}$.
\end{remark}

\section{More on AdaSDE}
\subsection{Experiment details.}

\noindent\textbf{Experiment detail in main result}

Since AdaSDE has fewer than 40 parameters, its training incurs minimal computational cost. We train $\Theta$ for 10K images, which only takes 5-10 minutes on CIFAR10 with a single 4090 GPU and about 20 minutes on LSUN Bedroom with four 4090 GPUs. For generating reference teacher trajectories, we use DPM-Solver-2 with M=3.  For tuning across all datasets, we employed a learning rate of 0.2 along with a cosine learning rate schedule (coslr). The random seed was fixed to 0 to ensure consistent reproducibility of the experimental results. To ensure the robustness of our experimental results, we conducted ten independent runs for each NFE (Number of Function Evaluations) setting on the CIFAR10 dataset. Across these runs, the FID (Fréchet Inception Distance) scores consistently varied by no more than 0.1.

\subsection{Time uniform scheme}

\cite{ho2020denoisingdiffusionprobabilisticmodels} proposes a discretization scheme for diffusion sampling given the starting $\sigma_{\max }$, end time $\sigma_{\min }$ and $\epsilon_s$. Denote the number of steps as $N$, then the \textit{time uniform} discretization scheme is:
\begin{align*}
\sigma(t) &= \left(e^{0.5\,\beta_d\, t^2 + \beta_{\min}\, t}-1\right)^{0.5} \\[2pt]
\sigma^{-1}(\sigma) &= 
\frac{\sqrt{\beta_{\min}^2 + 2\,\beta_d \,\ln\!\left(\sigma^2+1\right)} - \beta_{\min}}{\beta_d} \\[4pt]
\beta_d &= \frac{2\left(\ln\!\left(\sigma_{\min}^2+1\right)/\epsilon_s - \ln\!\left(\sigma_{\max}^2+1\right)\right)}{\epsilon_s - 1} \\
\beta_{\min} &= \ln\!\left(\sigma_{\max}^2+1\right) - 0.5\,\beta_d \\[4pt]
t_{\text{temp}} &= \left(1 + \frac{i}{N-1}\left(\epsilon_s^{1/\rho}-1\right)\right)^{\rho} \\[2pt]
t_i &= \sigma\!\left(t_{\text{temp}}\right)
\end{align*}

We set  $\sigma_{\max }=80.0$, $\sigma_{\min}=0.002$, $\rho=1$ and $\epsilon_s = 10^{-3}$ across all datasets in our experiments.

\subsection{Supplementary experimental results}
\sisetup{
  group-digits = false
}

\newcommand{\fidhead}{\textbf{FID}~\textcolor{red}{\(\downarrow\)}}
\newcommand{\cliphead}{\textbf{CLIP (\%)}~\textcolor{green!60!black}{\(\uparrow\)}}

\begin{table}[!htbp]
\centering
\caption{Evaluation on MSCOCO 512$\times$512 (Flux.1-dev).}
\label{tab:coco-flux}
\setlength{\tabcolsep}{8pt}
\renewcommand{\arraystretch}{1.2}
\begin{tabular}{l c l S[table-format=2.2] S[table-format=2.2]}
\toprule
\textbf{Model} & \textbf{NFE} & \textbf{Sampler/Method} &
{\fidhead} & {\cliphead} \\
\midrule
\multirow{6}{*}{\textbf{Flux.1-dev 512$\times$512}}
  & \multirow{2}{*}{6} & DPM-Solver-2 & 54.09 & 28.49 \\
  &                     & \ours        & 35.32 & 29.94 \\
\cmidrule(lr){2-5}
  & \multirow{2}{*}{8} & DPM-Solver-2 & 30.17 & 29.75 \\
  &                     & \ours        & 26.51 & 30.51 \\
\cmidrule(lr){2-5}
  & \multirow{2}{*}{10} & DPM-Solver-2 & 26.32 & 30.32 \\
  &                     & \ours        & 23.54 & 30.77 \\
\bottomrule
\end{tabular}
\end{table}
\begin{figure}[ht]
\centering
\includegraphics[width=0.9\textwidth]{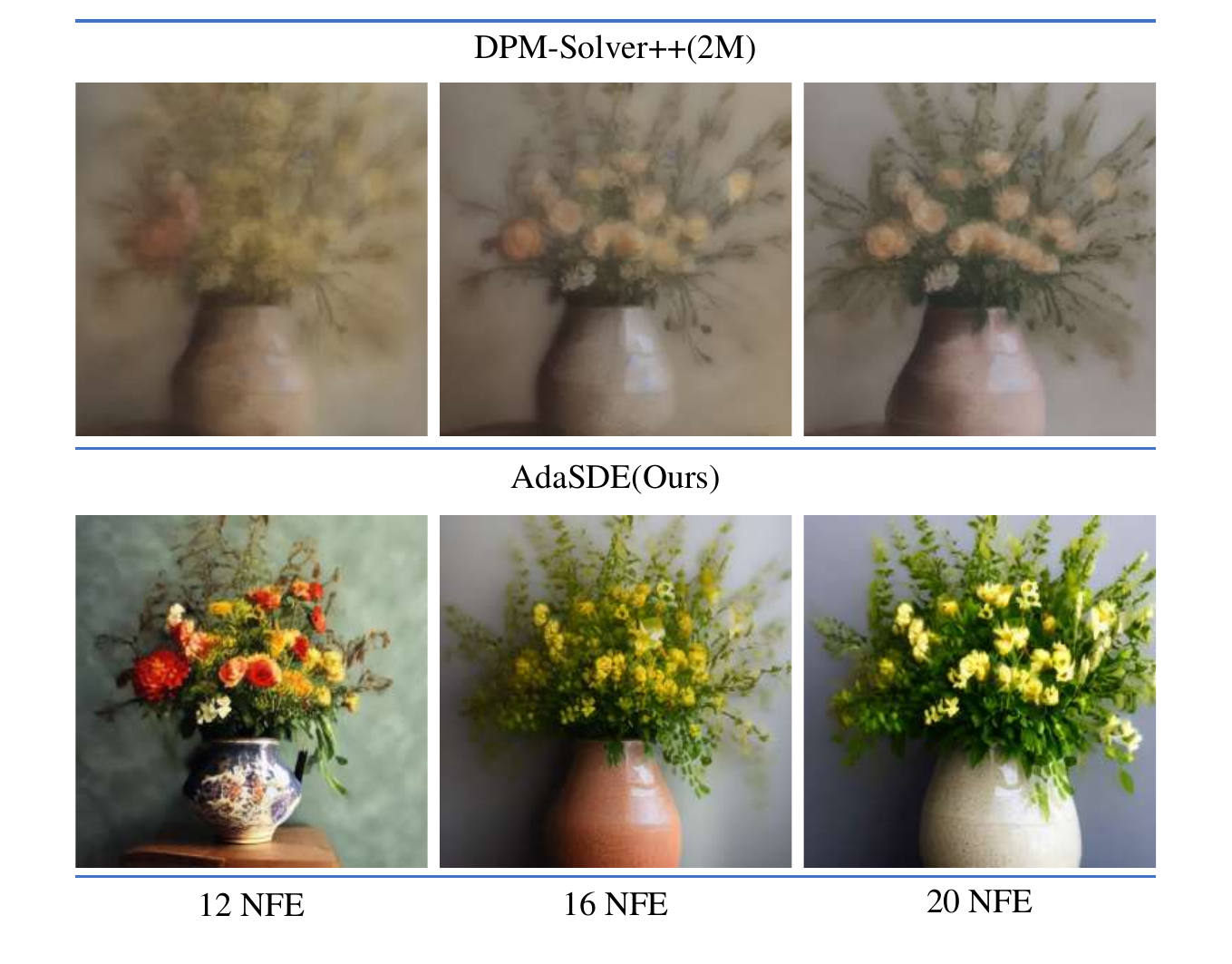}
\caption{Comparison of image synthesis quality under identical NFE constraints using AdaSDE (ours) and DPM-Solver++ (2M). Both methods generate images with Stable Diffusion v1.5~\cite{rombach2022high} and classifier-free guidance (scale = 7.5) for the prompt \textit{``A photo of some flowers in a ceramic vase"}.}
\label{fig:flowers}
\end{figure}
\begin{table}[]
    \centering
    \small
    \setlength{\tabcolsep}{4pt}
    \renewcommand{\arraystretch}{1.2}
        \caption{Unconditional generation results on CIFAR10 32×32.}\label{table:cifar10_result}
    \begin{tabular}{lccrrrrrrrrr}
        \toprule
        \multirow{2}{*}{\textbf{Method}} & \multirow{2}{*}{\textbf{AFS}}  & \multicolumn{8}{c}{\textbf{NFE}} \\
        \cmidrule(lr){4-11}
         &  &  & 3 & 4 & 5 & 6 & 7 & 8 & 9 & 10 \\
         \midrule
        \multirow{1}{*}{DPM-Solver-v3} & $\times$ & & - & - & 15.10 & 11.39 & - & 8.96 & - & 8.27 \\
        \midrule
        \multirow{2}{*}{UniPC} & $\times$ & & 109.6 & 45.20 & 23.98 & 11.14 & 5.83 & 3.99 & 3.21 & 2.89 \\
         & $\checkmark$ & & 54.36 & 20.55 & 9.01 & 5.75 & 4.11 & 3.26 & 2.93 & 2.65 \\
         \midrule
        \multirow{2}{*}{DPM-Solver++(3M)} & $\times$ & & 110.0 & 46.52 & 24.97 & 11.99 & 6.74 & 4.54 & 3.42 & 3.00 \\
         & $\checkmark$ & & 55.74 & 22.40 & 9.94 & 5.97 & 4.29 & 3.37 & 2.99 & 2.71 \\
         \midrule
        \multirow{2}{*}{iPNDM} & $\times$ &  & 47.98 & 24.82 & 13.59 & 7.05 & 5.08 & 3.69 & 3.17 & 2.77 \\
         & $\checkmark$ &  & 24.54 & 13.92 & 7.76 & 5.07 & 4.04 & 3.22 & 2.83 & \textbf{2.56} \\
        \midrule
        \multirow{2}{*}{DDIM} & $\times$ &   & 93.36 & 66.76 & 49.66 & 35.62 & 27.93 & 22.32 & 18.43 & 15.69 \\
         & $\checkmark$ &   & 67.26 & 49.96 & 35.78 & 28.00 & 22.37 & 18.48 & 15.69 & 13.47 \\
         \midrule
        \multirow{2}{*}{DPM-Solver-2} & $\times$ &   & - & 205.41 & - & 45.32 & - & 12.93 & - & 10.65 \\
         & $\checkmark$ &   & 227.32 & - & 47.22 & - & 13.68 & - & 10.89 \\
      \midrule
         \multirow{2}{*}{AMED-Solver}& $\times$ &  & - & 17.18 & - & 7.04 & - & 5.56 & - & 4.14 \\
         & $\checkmark$ &  & 18.49 & - & 7.59 & - & 4.36 & - & 3.67 & - \\
        \midrule
         \multirow{2}{*}{AdaSDE (ours)} & $\times$ &  & - & \textbf{10.16} & - & \textbf{4.67}& - & \textbf{3.18} & - & 2.65 \\
         & $\checkmark$ &  & \textbf{12.62} & - & \textbf{4.18} & - & \textbf{2.88} & - & \textbf{2.56} & - \\
        \bottomrule
    \end{tabular}
\end{table}
\begin{table}[]
    \centering
    \small
    \setlength{\tabcolsep}{4pt}
    \renewcommand{\arraystretch}{1.2}
        \caption{Unconditional generation results on ImageNet 64×64.}\label{table:imagenet_result}
    \begin{tabular}{lccrrrrrrrrr}
        \toprule
        \multirow{2}{*}{\textbf{Method}} & \multirow{2}{*}{\textbf{AFS}}  & \multicolumn{8}{c}{\textbf{NFE}} \\
        \cmidrule(lr){4-11}
         &  &  & 3 & 4 & 5 & 6 & 7 & 8 & 9 & 10 \\
        \midrule
        \multirow{2}{*}{UniPC} & $\times$ & & 91.38 & 55.63 & 54.36 & 14.30 & 9.57 & 7.52 & 6.34 & 5.53 \\
         & $\checkmark$ & & 64.54 & 29.59 & 16.17 & 11.03 & 8.51 & 6.98 & 6.04 & 5.26 \\
         \midrule
        \multirow{2}{*}{DPM-Solver++(3M)} & $\times$ & & 91.52 & 56.34 & 25.49 & 15.06 & 10.14 & 7.84 & 6.48 & 5.67 \\
         & $\checkmark$ & & 65.20 & 30.56 & 16.87 & 11.38 & 8.68 & 7.12 & 6.25 & 5.58 \\
         \midrule
        \multirow{2}{*}{iPNDM} & $\times$ &  & 58.53 & 33.79 & 18.99 & 12.92 & 9.17 & 7.20 & 5.91 & 5.11 \\
         & $\checkmark$ &  & 34.81 & 21.31 & 15.53 & 10.27 & 8.64 & 6.60 & 5.64 &4.97 \\
        \midrule
        \multirow{2}{*}{DDIM} & $\times$ &   & 82.96 & 58.43 & 43.81 & 34.03 & 27.46 & 22.59 & 19.27& 16.72 \\
         & $\checkmark$ &   & 62.42 & 46.06 & 35.48 & 28.50 & 23.31 & 19.82 & 17.14 & 15.02 \\
         \midrule
        \multirow{2}{*}{DPM-Solver-2} & $\times$ &   & - & 140.20 & - & 59.47 & - & 22.02 & - & 11.31 \\
         & $\checkmark$ &   & 163.21 & - & 62.32 & - & 23.68 & - & 11.89 \\
      \midrule
         \multirow{2}{*}{AMED-Solver}& $\times$ &  & - & 32.69 & - & 10.63 & - & 7.71 & - & 6.06 \\
         & $\checkmark$ &  & 38.10 & - & 10.74 & - &  6.66 & - & 5.44 & - \\
        \midrule
         \multirow{2}{*}{AdaSDE (ours)} & $\times$ &  & - & \textbf{18.53} & - & \textbf{7.01}& - & \textbf{5.36} & - & \textbf{4.63} \\
         & $\checkmark$ &  & \textbf{18.51} & - & \textbf{6.90} & - & \textbf{5.26} & - & \textbf{4.59} & - \\
        \bottomrule
    \end{tabular}
\end{table}


\newcommand{\nfea}{\text{5}}
\newcommand{\nfeb}{\text{9}}

\newpage
\begin{figure*}
  \centering
  \setlength{\abovecaptionskip}{-0.03cm}
  \begin{subfigure}[b]{0.48\linewidth}
      \setlength{\abovecaptionskip}{0.03cm}
      \includegraphics[width=\linewidth]{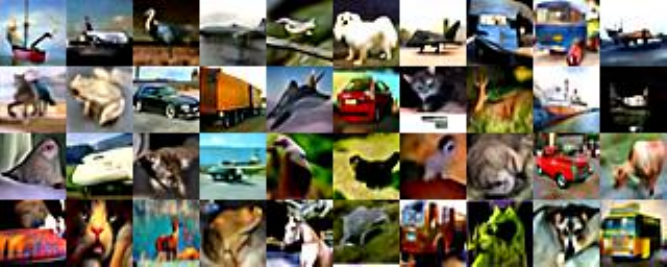}
      \caption{DPM-Solver-2. NFE=\(\nfea\), FID = 43.27}
  \end{subfigure}
  \begin{subfigure}[b]{0.48\linewidth}
        \setlength{\abovecaptionskip}{0.03cm}
      \includegraphics[width=\linewidth]{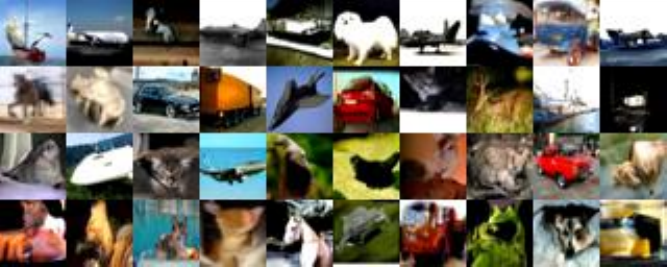}
      \caption{DPM-Solver-2. NFE=\(\nfeb\), FID = 8.65}
  \end{subfigure}
  \begin{subfigure}[b]{0.48\linewidth}
        \setlength{\abovecaptionskip}{0.03cm}
      \includegraphics[width=\linewidth]{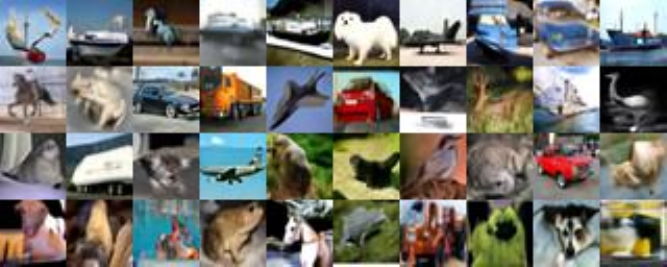}
      \caption{\(\ours\). NFE=\(\nfea\), FID = 4.18}
  \end{subfigure}
  \begin{subfigure}[b]{0.48\linewidth}
        \setlength{\abovecaptionskip}{0.03cm}
      \includegraphics[width=\linewidth]{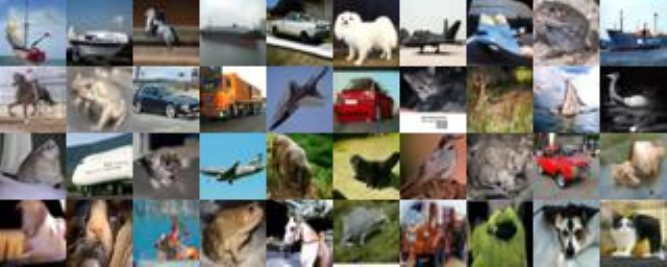}
      \caption{\(\ours\). NFE=\(\nfeb\), FID = 2.56}
  \end{subfigure}
  \setlength{\belowcaptionskip}{-0.35cm}
  \caption{Qualitative result on CIFAR10 32$\times$32 (\(\nfea\) and \(\nfeb\) NFEs)}
  
  \label{fig:sup_grid_cifar10_2}
\end{figure*}

\begin{figure*}
  \centering
  \setlength{\abovecaptionskip}{-0.03cm}
  \begin{subfigure}[b]{0.48\linewidth}
        \setlength{\abovecaptionskip}{0.03cm}
      \includegraphics[width=\linewidth]{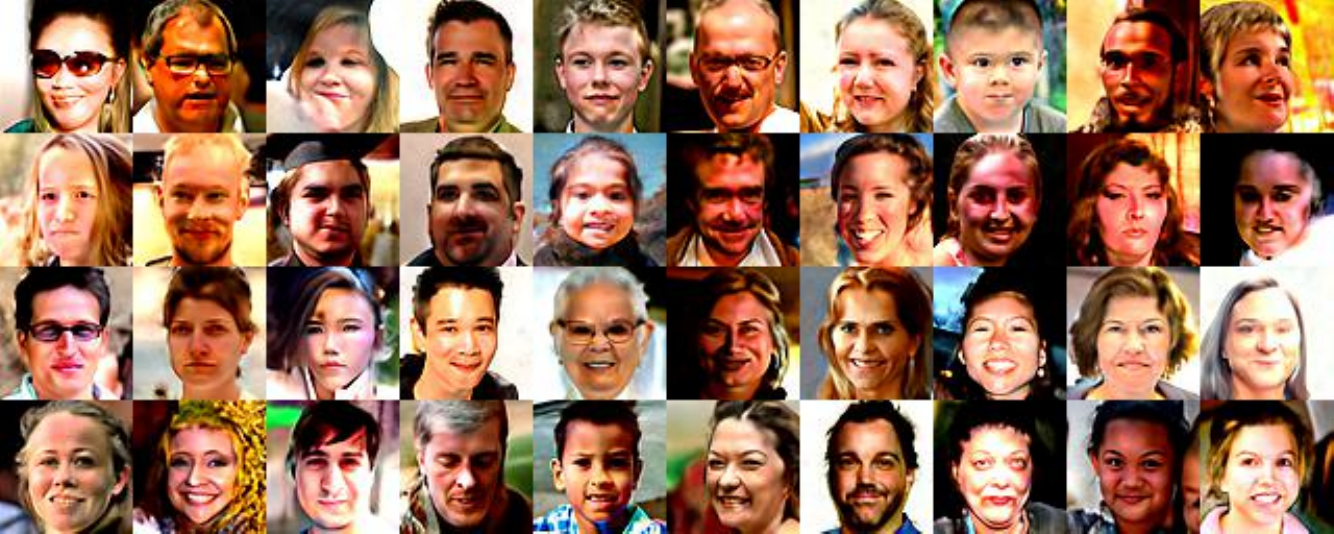}
      \caption{DPM-Solver-2. NFE=\(\nfea\), FID = 74.68}
  \end{subfigure}
  \begin{subfigure}[b]{0.48\linewidth}
        \setlength{\abovecaptionskip}{0.03cm}
      \includegraphics[width=\linewidth]{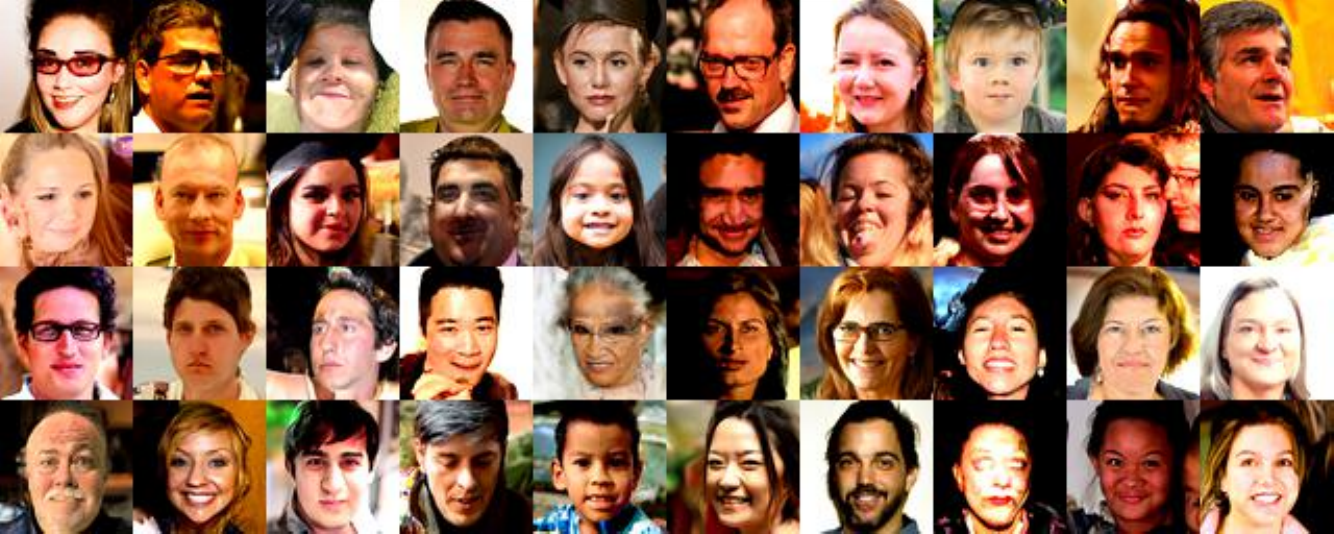}
      \caption{DPM-Solver-2. NFE=\(\nfeb\), FID = 16.89}
  \end{subfigure}
  \begin{subfigure}[b]{0.48\linewidth}
        \setlength{\abovecaptionskip}{0.03cm}
      \includegraphics[width=\linewidth]{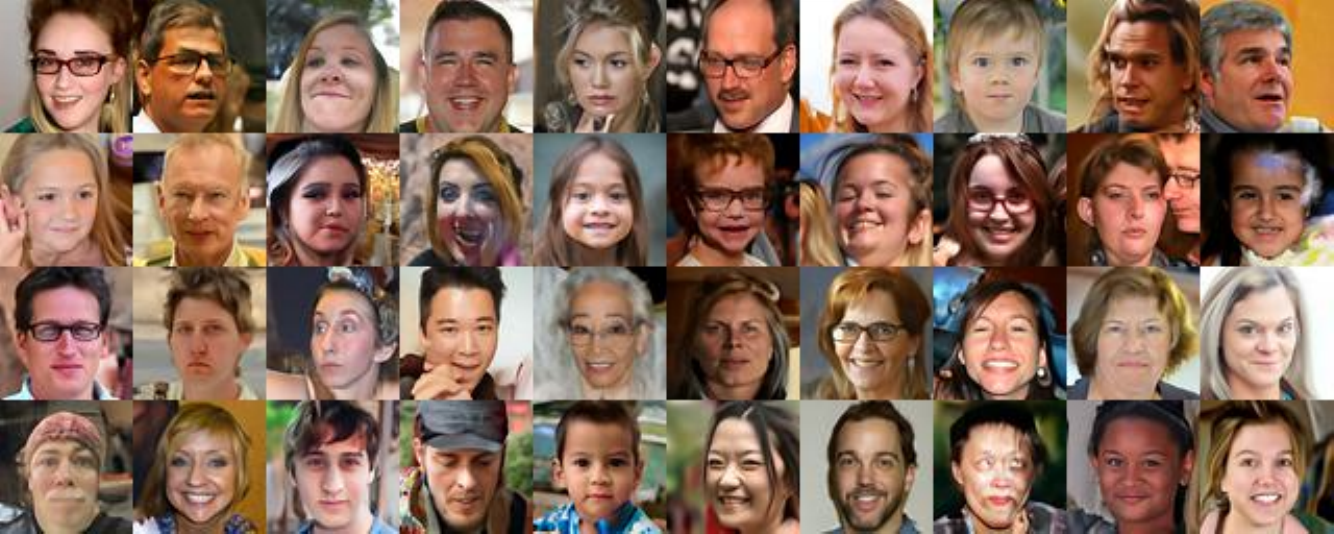}
      \caption{\(\ours\). NFE=\(\nfea\), FID = 8.05}
  \end{subfigure}
  \begin{subfigure}[b]{0.48\linewidth}
        \setlength{\abovecaptionskip}{0.03cm}
      \includegraphics[width=\linewidth]{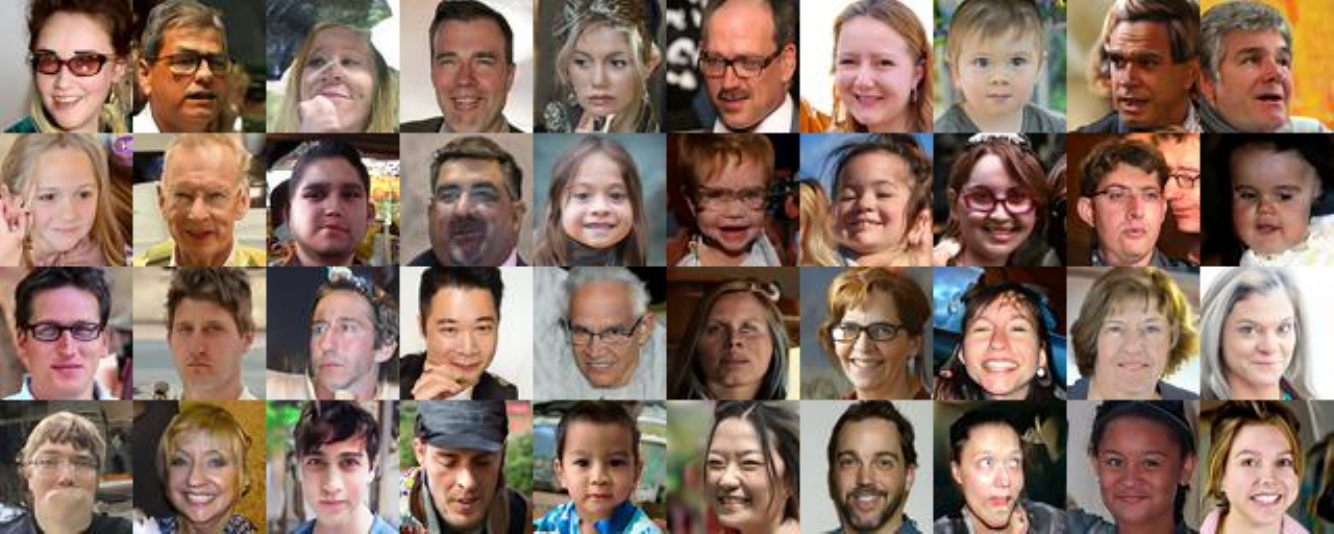}
      \caption{\(\ours\). NFE=\(\nfeb\), FID = 4.19}
  \end{subfigure}
  \setlength{\belowcaptionskip}{-0.35cm}
  \caption{Qualitative result on FFHQ 64$\times$64 (\(\nfea\) and \(\nfeb\) NFEs)}
  \label{fig:sup_grid_cifar10_3}
\end{figure*}

\begin{figure*}
  \centering
  \setlength{\abovecaptionskip}{-0.03cm}
  \begin{subfigure}[b]{0.48\linewidth}
  \setlength{\abovecaptionskip}{0.03cm}
      \includegraphics[width=\linewidth]{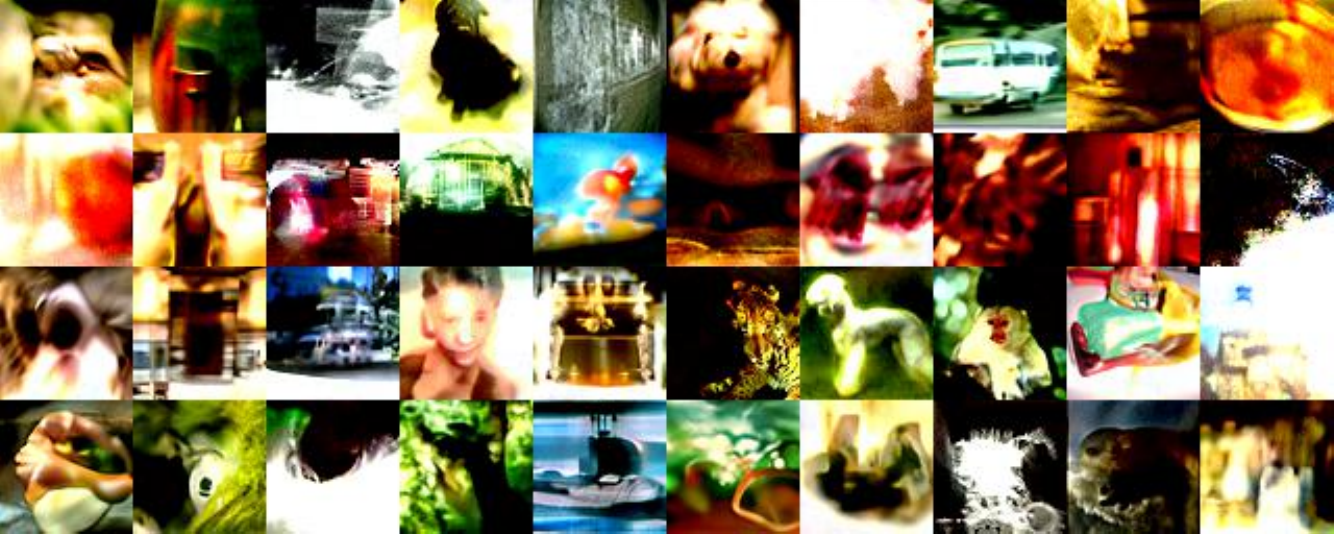}
      \caption{DPM-Solver-2. NFE=\(\nfea\), FID = 59.47}
  \end{subfigure}
  \begin{subfigure}[b]{0.48\linewidth}
  \setlength{\abovecaptionskip}{0.03cm}
      \includegraphics[width=\linewidth]{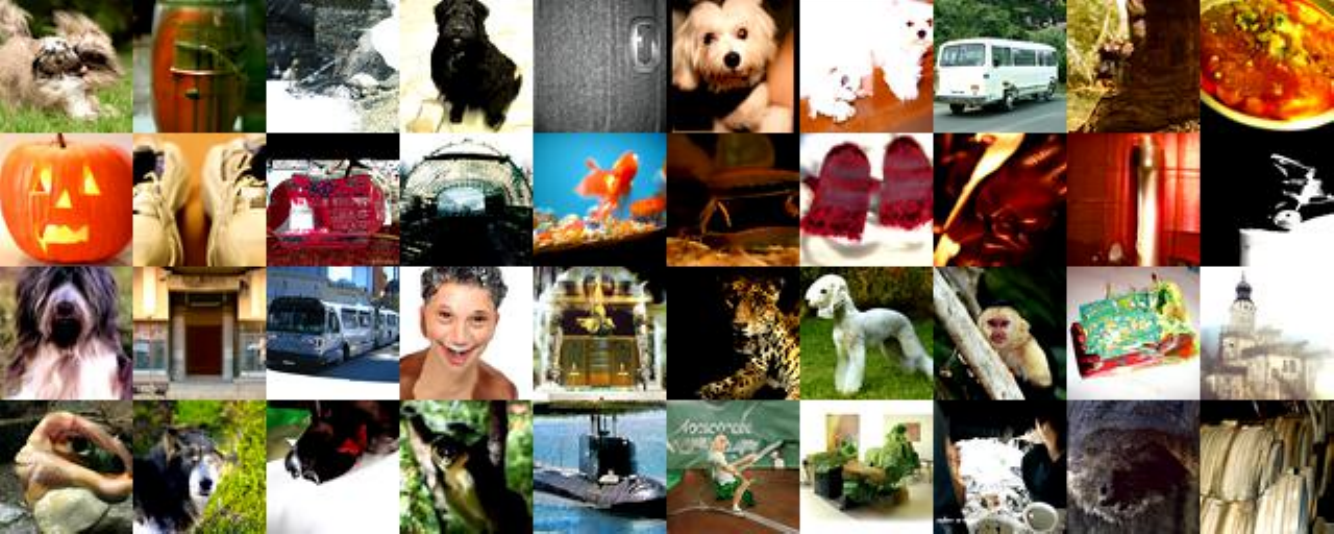}
      \caption{DPM-Solver-2. NFE=\(\nfeb\), FID = 11.31}
  \end{subfigure}
  \begin{subfigure}[b]{0.48\linewidth}
  \setlength{\abovecaptionskip}{0.03cm}
      \includegraphics[width=\linewidth]{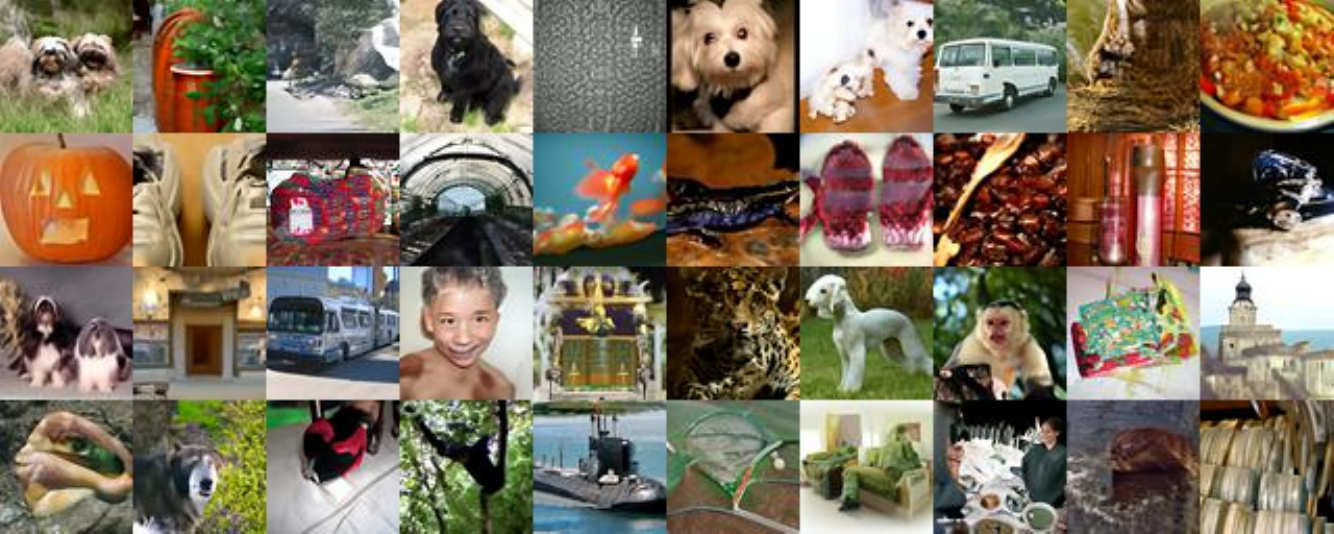}
      \caption{\(\ours\). NFE=\(\nfea\), FID = 6.90}
  \end{subfigure}
  \begin{subfigure}[b]{0.48\linewidth}
  \setlength{\abovecaptionskip}{0.03cm}
      \includegraphics[width=\linewidth]{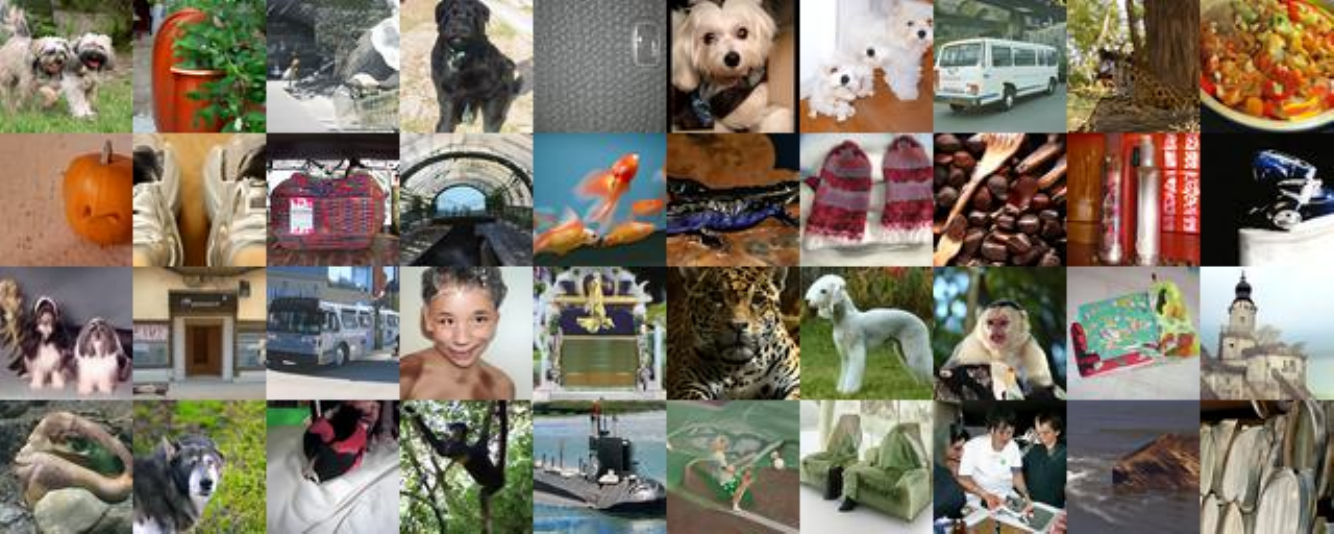}
      \caption{\(\ours\). NFE=\(\nfeb\), FID = 4.59}
  \end{subfigure}
  \caption{Qualitative result on ImageNet 64$\times$64 (\(\nfea\) and \(\nfeb\) NFEs)}
  \label{fig:sup_grid_cifar10_4}
\end{figure*}

\clearpage


\end{document}